\newtheorem*{rep@theorem}{\rep@title}
\newcommand{\newreptheorem}[2]{%
\newenvironment{rep#1}[1]{%
 \def\rep@title{#2 \ref{##1}}%
 \begin{rep@theorem}}%
 {\end{rep@theorem}}}
\newtheorem{theorem}{Theorem}[section]
\newtheorem{lemma}[theorem]{Lemma}
\newtheorem{definition}[theorem]{Definition}
\theoremstyle{definition}
\newcounter{saveenumi}
\algrenewcommand\algorithmicrequire{\textbf{Parameters:}}
\algrenewcommand\algorithmicensure{\textbf{Initialization:}}
\newcommand{\plcomment}[1]{}
\newcommand{\cB}{\mathcal{B}}
\newcommand{\cF}{\mathcal{F}}
\newcommand{\cK}{\mathcal{K}}
\newcommand{\cM}{\mathcal{M}}
\newcommand{\cN}{\mathcal{N}}
\newcommand{\calO}{\mathcal{O}}
\newcommand{\mathE}{\mathbb{E}}
\newcommand{\hy}{\hat{y}}
\newcommand{\tJ}{\tilde{J}}
\newcommand{\tildeK}{\tilde{K}}
\newcommand{\tK}{\tilde{K}}
\newcommand{\tV}{\tilde{V}}
\newcommand{\tTheta}{\tilde{\Theta}}
\DeclareMathOperator{\VEC}{vec}
\DeclareMathOperator{\vecrm}{vec}
\DeclareMathOperator{\op}{op}
\newcommand{\removed}[1]{}
\newcommand{\norm}[1]{\left\lVert{#1}\right\rVert}
\newcommand{\N}{\mathbb{N}}
\newcommand{\R}{\mathbb{R}}
\newcommand{\E}{\mathbb{E}}
\renewcommand{\E}{\mathop\mathbb{E}}
\newcommand{\rhomax}{\rho_{\mathrm{max}}}
\newcommand*{\eqdef}{\stackrel{\textup{def}}{=}}
\title{Generalization bounds \\
       for deep convolutional neural networks \\}
\author{
\hspace*{-5pt}
Philip M. Long\thanks{Authors are ordered alphabetically.}
\hspace*{4pt}
and Hanie Sedghi\footnotemark[1] \\
Google Brain \\
\texttt{\{plong,hsedghi\}@google.com} \\
}
\begin{document}

\maketitle

\begin{abstract}
We prove bounds on the generalization error of convolutional networks.
The bounds are in terms of the training loss, the number of
parameters, the Lipschitz constant of the loss and the distance from
the weights to the initial weights.  They are independent of the
number of pixels in the input, and the height and width of hidden
feature maps.
We present experiments using CIFAR-10 with varying
hyperparameters of a deep convolutional network, comparing our bounds
with practical generalization gaps.
\end{abstract}

\section{Introduction}

Recently, substantial progress has been made regarding theoretical
analysis of the generalization of deep learning models
\citep[see][]{neyshabur2015norm,zhang2016understanding,dziugaite2017computing,bartlett2017spectrally,neyshabur2017exploring,neyshabur2018pac,arora2018stronger,golowich2018size,neyshabur2019towards,wei2019data,cao2019generalization,daniely2019generalization}.
One interesting point that has been explored, with roots in
\citep{bartlett1998sample}, is that even if there are many parameters,
the set of 
models that can be represented
using weights with small magnitude is
limited enough to provide leverage for induction
\citep{neyshabur2015norm,bartlett2017spectrally,neyshabur2018pac}.
Intuitively, if the weights start small, since the most popular
training algorithms make small, incremental updates that get smaller
as the training accuracy improves, there is a tendency for these
algorithms to produce small weights.  (For some deeper theoretical
exploration of implicit bias in deep learning and related settings,
see
\citep{gunasekar2017implicit,gunasekar2018characterizing,gunasekar2018implicit,DBLP:conf/icml/MaWCC18}.)
Even more recently, authors have proved generalization bounds in terms
of the distance from the initial setting of the weights instead of the
size of the weights
\citep{dziugaite2017computing,bartlett2017spectrally,neyshabur2019towards,nagarajan2019generalization}.
This is important because small initial weights may promote vanishing
gradients; it is advisable instead to choose initial weights that
maintain a strong but non-exploding signal as computation flows
through the network
\citep[see][]{lecun2012efficient,glorot2010understanding,saxe2013exact,he2015delving}.
A number of recent theoretical analyses have shown that, for a large
network initialized in this way, 
accurate models can be found
by traveling a short distance
in parameter space
\citep[see][]{du2019gradient,du2019deep,allen2019convergence,zou2018stochastic,lee2019wide}.
Thus, the distance from initialization may be expected to be
significantly smaller than the magnitude of the weights.  Furthermore,
there is theoretical reason to expect that, as the number of
parameters increases, the distance from initialization decreases.
This motivates generalization bounds in terms of distance from
initialization \citep{dziugaite2017computing,bartlett2017spectrally}.

Convolutional layers are used in all competitive deep neural
network architectures applied to image processing tasks.  
The most influential 
generalization analyses in terms of distance from initialization
have thus far concentrated on networks with fully connected
layers.  Since a convolutional layer has an alternative representation
as a fully connected layer, these analyses apply in the
case of convolutional networks, but, intuitively, the weight-tying
employed in the convolutional layer constrains the set of
functions computed by the layer.  This additional restriction
should be expected to aid generalization.  

In this paper, we prove new generalization bounds for convolutional
networks that take account of this effect.  
As in earlier analyses for
the fully connected case, our bounds are in terms of the distance from
the initial weights, and the number of parameters.  
Additionally, our bounds 
independent of the number of pixels in the input, or the height and
width of the hidden feature maps.

Our most general bounds apply to networks including both convolutional
and fully connected layers, and, as such, they also apply for purely
fully connected networks.  In contrast with earlier bounds
for settings like the one considered here, our
bounds are in terms of a sum over layers of the distance from
initialization of the layer.  Earlier bounds
were in terms of product of these distances
which led to an
exponential dependency on depth. Our bounds have linear dependency
on depth which is more aligned with practical observations.


As is often the case for generalization analyses, the central
technical lemmas are bounds on covering numbers.
Borrowing a technique due to
\citet{barron1999risk}, these are proved by
bounding the Lipschitz constant of the mapping from the parameters to
the loss of the functions computed by the networks.  
(Our proof also borrows ideas from
the analysis of the fully connected case, especially 
\citep{bartlett2017spectrally,neyshabur2018pac}.)
Covering bounds may be applied to obtain a huge variety
of generalization bounds.  We present two examples for
each covering bound.
One is a standard bound
on the difference between training and test error.  
Perhaps the more relevant bound 
has the flavor of ``relative error''; it is especially
strong when the training loss is small, as is often the
case in modern practice.  Our covering bounds are polynomial in the
inverse of the granularity of the cover.  
Such bounds seem to be especially useful for bounding the relative error.

In particular, our covering bounds are of the form $(B/\epsilon)^W$,
where $\epsilon$ is the granularity of the cover, $B$ is 
proportional to the Lipschitz
constant of a mapping from parameters to functions, and $W$ is the
number of parameters in the model.  We apply a bound from the
empirical process literature in terms of covering bounds of this form
due to \citet{gine2001consistency}, who paid particular attention to
the dependence of estimation error on $B$.
This bound may be helpful
for other analyses of the generalization of deep learning in terms of
different notions of distance from initialization.
(Applying bounds in terms of Dudley's entropy integral in
the standard way leads to an exponentially worse dependence
on $B$.)

{\bf Related previous work.}  \citet{du2018many} proved 
bounds for
CNNs in terms of the number of parameters, for two-layer networks.
\citet{arora2018stronger} analyzed the generalization of networks
output by a compression scheme applied to CNNs.
\citet{DBLP:conf/icml/ZhouF18} provided a generalization guarantee for
CNNs satisfying a constraint on the rank of matrices formed from their
kernels.  \citet{li2018tighter} analyzed the generalization of CNNs
under other constraints on the parameters.  \citet{lee2018learning}
provided a size-free bound for CNNs in a general unsupervised learning
framework that includes PCA and codebook learning.

{\bf Related independent work.}  
\citet{ledent2019cnngen} proved bounds for CNNs 
that also took account of the effect of weight-tying. 
(Their bounds retain the exponential dependence on the
depth of the network from earlier work, and are otherwise qualitatively
dissimilar to ours.)
\citet{wei2019improved} obtained bounds for fully connected networks
with an improved dependence on the depth of the network.
\citet{daniely2019generalization} 
obtained improved bounds for constant-depth fully-connected networks.
\citet{jiang2019fantastic} conducted a wide-ranging empirical study
of the dependence of the generalization gap on a variety of
quantities, including distance from initialization.

{\bf Notation.}
If $K^{(i)}$ is the kernel of convolutional layer number $i$, then
$\op(K^{(i)})$ refers to its operator matrix~\footnote{Convolution is a linear operator and can thus be written as a matrix-vector product. The operator matrix of kernel $K$, refers to the matrix that describes convolving the input with kernel $K$. For details, see~\citep{sedghi2018singular}. } 
and
$\VEC(K^{(i)})$ denotes the vectorization of the kernel tensor $K^{(i)}$.
For matrix $M$, $\Vert M \Vert_2$ denotes the operator norm of $M$. 
%
For vectors, $|| \cdot ||$ represents the Euclidian norm,
and $|| \cdot ||_1$ is the $L_1$ norm.
For a multiset $S$ of elements of some set $Z$, and
a function $g$ from $Z$ to $\R$, let
$\E_S[g] = \frac{1}{m} \sum_{t=1}^m g(z_t)$.
We will denote the function
parameterized by $\Theta$ by $f_{\Theta}$.

\section{Bounds for a basic setting}

In this section, we provide a bound for a clean and simple
setting.

\subsection{The setting and the bounds}
\label{s:setting.basic}

In the basic setting, the input and all hidden
layers have the same number $c$ of channels.  Each input
$x \in \mathbb{R}^{d \times d \times c}$
satisfies $\Vert \VEC(x) \Vert \leq 1$.

We consider a deep convolutional network,
whose convolutional layers use zero-padding
\citep[see][]{goodfellow2016deep}.
%
Each layer but the last consists of a convolution
followed by an activation function that is applied
componentwise.  The
activations are $1$-Lipschitz and nonexpansive (examples include ReLU and tanh).
The kernels of the convolutional layers are
$K^{(i)} \in \mathbb{R}^{k \times k \times c \times c}$
for $i \in \lbrace 1,\dotsc, L \rbrace$.  Let $K = (K^{(1)},...,K^{(L)})$ be the
$L \times k \times k \times c \times c$-tensor obtained by concatening the
kernels for the various layers.  Vector $w$ represents the last layer;
the weights in the last layer are fixed with $|| w || = 1$.  
Let $W = L k^2 c^2$ be the total number of trainable parameters in the network.

We let $K_0^{(1)}, \dotsc, K_0^{(L)}$ take arbitrary fixed values
(interpreted as the initial values of the kernels) subject to the
constraint that, for all layers $i$, $|| \op(K_0^{(i)}) ||_2 = 1$.
(This is often the goal of initialization schemes.)  Let $K_0$ be the
corresponding $L \times k \times k \times c \times c$ tensor. We
provide a generalization bound in terms of distance from initialization,
along with other natural
parameters of the problem.  The distance is measured with
$|| K - K_0 ||_{\sigma} \eqdef \sum_{i=1}^L || \op(K^{(i)}) - \op(K_0^{(i)}) ||_2$.

For $\beta > 0$, define $\cK_{\beta}$ to be the
set of kernel tensors within $|| \cdot ||_{\sigma}$ distance $\beta$ of $K_0$,
and define $F_{\beta}$ to be set of functions computed by
CNNs with kernels in $\cK_{\beta}$.
That is,
$F_{\beta} = \lbrace f_K: || K - K_0 ||_{\sigma} \leq \beta \rbrace$.

Let $\ell: \R \times \R \rightarrow [0,1]$ be a loss function
such that $\ell(\cdot, y)$ is $\lambda$-Lipschitz for all $y$.  An example is
the $1/\lambda$-margin loss.

For a function $f$ from $\R^{d \times d \times c}$ to 
$\R$, let
$\ell_f(x,y) = \ell(f(x), y)$.  

We will use $S$ to denote a set 
$\lbrace (x_1,y_1), \dotsc, (x_m,y_m) \rbrace 
  = \lbrace z_1, \dotsc z_m \rbrace$
of random training examples where each $z_t = (x_t,y_t)$.

\begin{theorem}[Basic bounds] \label{thm:generalization}
For any $\eta > 0$, there is a $C > 0$ such that
for any $\beta, \delta > 0$, $\lambda \geq 1$, 
for any joint probability distribution $P$
over $\R^{d \times d \times c} \times \R$, if
a training set $S$ of $n$ examples is drawn
independently at random from $P$, then, with
probability at least $1-\delta$, for all $f \in F_{\beta}$, 
\begin{align*}
&     \mathE_{z \sim P} [\ell_f(z)] \leq 
    (1 + \eta) \mathE_S [\ell_f] 
  + \frac{C (W (\beta + \log (\lambda  n)) + \log(1/\delta))}{n}
\end{align*}
and, if $\beta \geq 5$, then
\begin{align*}
     \mathE_{z \sim P} [\ell_f(z)] \leq 
    \mathE_S [\ell_f] +
  C \sqrt{\frac{W (\beta + \log(\lambda)) + \log(1/\delta)}{n}}
\end{align*}
and otherwise
\begin{align*}
     \mathE_{z \sim P} [\ell_f(z)] \leq 
    \mathE_S [\ell_f] +
  C \left(
      \beta \lambda \sqrt{\frac{W}{n}}
       + \sqrt{\frac{\log(1/\delta)}{n}}
     \right).
\end{align*}
\end{theorem}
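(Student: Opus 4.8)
The plan is to follow the covering-number route sketched in the introduction: first bound the covering numbers of the loss class $\ell_{F_\beta} = \{\ell_f : f \in F_\beta\}$ by a quantity of the form $(B/\epsilon)^W$, and then feed this into off-the-shelf empirical-process inequalities to obtain the three displayed bounds. The heart of the argument is a \textbf{perturbation (Lipschitz) bound}: the map from the kernel tensor $K$ to the function $f_K$ is Lipschitz in the $\|\cdot\|_\sigma$ metric, uniformly over inputs with $\|\VEC(x)\| \le 1$. Given two tensors $K, \tilde K \in \cK_\beta$, I would change the kernels one layer at a time and telescope: writing the network as an alternating composition of the linear operators $\op(K^{(i)})$ and the nonexpansive $1$-Lipschitz activations, the contribution of layer $i$ to $f_K(x) - f_{\tilde K}(x)$ is controlled by $\|\op(K^{(i)}) - \op(\tilde K^{(i)})\|_2$ times the product of the operator norms of the \emph{other} layers and $\|w\| = 1$. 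Because $\|\op(K_0^{(i)})\|_2 = 1$ and the perturbations satisfy $\sum_i \|\op(K^{(i)}) - \op(K_0^{(i)})\|_2 \le \beta$, each factor is at most $1 + \beta_i$ with $\sum_i \beta_i \le \beta$, so every such product is at most $\prod_i (1 + \beta_i) \le e^\beta$. This is the crucial point that converts what would be a product over layers into the bound
\[
|f_K(x) - f_{\tilde K}(x)| \le e^{\beta} \sum_{i=1}^L \|\op(K^{(i)}) - \op(\tilde K^{(i)})\|_2 = e^\beta \, \| K - \tilde K \|_\sigma ,
\]
so that the dependence on depth enters additively rather than multiplicatively. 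Since $\ell(\cdot, y)$ is $\lambda$-Lipschitz, the loss map $K \mapsto \ell_{f_K}$ is then Lipschitz with constant $\lambda e^\beta$ in the sup-over-examples metric.

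Next I would convert this into a \textbf{covering bound}. The convolution operators $\op(K^{(i)})$ span a linear space of dimension at most $k^2 c^2$ (the number of free kernel entries), on which $\|\cdot\|_2$ is a norm, so a ball of radius $\beta$ in operator norm admits an $r$-cover of size at most $(3\beta/r)^{k^2 c^2}$ by the standard volumetric estimate. Taking a product of such covers across the $L$ layers and choosing $r$ so that the accumulated $\|\cdot\|_\sigma$ error matches the target function-space granularity $\epsilon/(\lambda e^\beta)$, the Lipschitz bound above yields a sup-norm cover of $\ell_{F_\beta}$ of size $(B/\epsilon)^W$, where $W = Lk^2c^2$ and $\log B = O(\beta + \log \lambda)$ up to lower-order terms. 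This is exactly the quantity that appears inside the logarithms of the theorem.

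Finally I would plug this covering bound into the appropriate \textbf{uniform-convergence results}. For the first, relative-error bound I would invoke the ratio-type inequality of \citet{gine2001consistency}, which is designed for covering numbers of precisely the form $(B/\epsilon)^W$ and whose estimation error depends only logarithmically on $B$; setting the cover granularity to $\epsilon = \Theta(1/n)$ produces the $W(\beta + \log(\lambda n))/n$ term together with the multiplicative slack factor $(1+\eta)$. For the two additive bounds I would use a standard chaining / Rademacher argument driven by the same covering numbers. When $\beta \ge 5$ the $\log B \approx \beta$ term dominates the entropy integral and gives the $\sqrt{W(\beta + \log \lambda)/n}$ rate; for small $\beta$ the class $\ell_{F_\beta}$ has sup-norm diameter $O(\lambda\beta)$, so integrating the entropy only over this short interval yields the sharper $\beta\lambda\sqrt{W/n}$ term.

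I expect the main obstacle to be twofold. First, the perturbation analysis must be carried out carefully enough that the depth enters only through the additive sum $\sum_i \|\op(K^{(i)}) - \op(\tilde K^{(i)})\|_2$ and the harmless scalar $e^\beta$, rather than through a genuine product of per-layer operator norms, which is what naive bounds (and most prior fully connected analyses) produce and which would reintroduce an exponential dependence on depth. Second, obtaining the fast $1/n$ relative-error rate with only logarithmic dependence on $B$ is delicate: applying Dudley's entropy integral in the usual way would reintroduce an effectively exponential-in-$\beta$ dependence on $B$, so the careful use of the \citet{gine2001consistency} ratio bound, rather than a generic chaining bound, is essential for the first inequality.
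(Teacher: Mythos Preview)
Your proposal is correct and matches the paper's proof essentially step for step: the layer-wise telescoping to obtain $|f_K(x)-f_{\tilde K}(x)|\le e^{\beta}\|K-\tilde K\|_\sigma$ (via $\prod_i(1+\beta_i)\le e^{\beta}$ under $\sum_i\beta_i\le\beta$), the resulting $(B/\epsilon)^W$ sup-norm cover with $B=\beta\lambda e^{\beta}$, and the appeal to off-the-shelf uniform-convergence lemmas are exactly what the paper does. One minor correction on which tool gives which inequality: the paper obtains the $(1+\eta)$ relative-error bound from a classical VC-type ratio inequality (not from \citet{gine2001consistency}), invokes \citet{gine2001consistency} for the \emph{second} ($\beta\ge5$) bound to get the $\sqrt{W\log B/n}$ dependence, and uses the Dudley-type estimate, which yields $B\sqrt{W/n}$, only for the third.
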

If Theorem~\ref{thm:generalization} is applied with the margin loss,
then $\mathE_{z \sim P} [\ell_f(z)]$ is in turn
an upper bound on the probability of misclassification on test
data.  Using the algorithm from \citep{sedghi2018singular},
$|| \cdot ||_{\sigma}$ may be efficiently computed.
Since $|| K - K_0 ||_{\sigma} \leq || \vecrm(K) - \vecrm(K_0)
||_1$ \citep{sedghi2018singular}, Theorem~\ref{thm:generalization}
yields the same bounds as a corollary if the definition of $F_{\beta}$
is replaced with the analogous definition using $|| \vecrm(K) -
\vecrm(K_0) ||_1$.


\subsection{Tools}

\begin{definition}
For $d \in N$, a set $G$ of real-valued functions with a common domain 
$Z$, we say that
$G$ is {\em $(B,d)$-Lipschitz parameterized}
if there is a norm $|| \cdot ||$ on $\R^d$ and
a mapping $\phi$ from the unit ball w.r.t.\ $|| \cdot ||$
in $\R^d$ to $G$ such that, 
for all $\theta$ and $\theta'$ such that
$|| \theta || \leq 1$ and $|| \theta' || \leq 1$, and
all $z \in Z$,
$
| (\phi(\theta))(z) - (\phi(\theta'))(z) | 
   \leq B || \theta - \theta' ||.
$
\end{definition}

The following lemma is essentially known.
Its proof, which uses standard techniques
\citep[see][]{Pol84,Tal94,talagrand1996new,barron1999risk,van2000empirical,gine2001consistency,mohri2018foundations},
is in Appendix~\ref{a:lipschitz}.
\begin{lemma}
\label{l:lipschitz}
Suppose a set $G$ of functions from a common domain $Z$
to $[0,M]$ is $(B,d)$-Lipschitz parameterized
for $B > 0$ and $d \in \N$.

Then, for any $\eta > 0$, there is a $C$ such that,
for all large enough $n \in \N$, for any $\delta > 0$, 
for any probability distribution $P$
over $Z$, if $S$ is obtained by sampling
$n$ times independently from $P$, then, with
probability at least $1-\delta$, for all $g \in G$,
\begin{align*}
     \mathE_{z \sim P} [g(z)] \leq 
    (1 + \eta) \mathE_S [g] + \frac{CM (d \log(B n) + \log(1/\delta))}{n}
\end{align*}
and if $B \geq 5$,
\begin{align*}
     \mathE_{z \sim P} [g(z)] \leq 
    \mathE_S [g] +
 C M \sqrt{\frac{d \log B + \log \frac{1}{\delta}}{n}},
\end{align*}
and, for all $B$, 
\begin{align*}
     \mathE_{z \sim P} [g(z)] \leq 
    \mathE_S [g] +
 C \left( B \sqrt{\frac{d}{n}} 
          + M \sqrt{\frac{\log \frac{1}{\delta}}{n}} \right).
\end{align*}
\end{lemma}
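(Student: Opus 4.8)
The plan is to reduce all three statements to a single covering-number estimate and then feed that estimate into off-the-shelf empirical-process bounds, being careful throughout to track the dependence on $B$. First I would exploit the $(B,d)$-Lipschitz parameterization directly: if $\phi$ maps the unit ball of $(\R^d,\norm{\cdot})$ into $G$ and is $B$-Lipschitz into the sup-norm metric on functions, then the image under $\phi$ of any $\alpha$-net of the unit ball is a $B\alpha$-cover of $G$ in $\norm{\cdot}_\infty$. Since the unit ball of any norm on $\R^d$ admits an $\alpha$-net of cardinality at most $(3/\alpha)^d$, taking $\alpha = \eps/B$ yields a covering bound of the form $N(G,\eps,\norm{\cdot}_\infty) \le (3B/\eps)^d$. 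Rescaling by $M$ so that the functions take values in $[0,1]$ (which replaces $B$ by $B/M$), this is exactly the polynomial, or ``VC-type,'' covering profile $(A/\eps)^v$ with $v=d$ advertised in the introduction.

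The second step is to convert this covering profile into deviation bounds. For the two additive $1/\sqrt n$ statements I would bound the expected supremum of the empirical process $\sup_{g \in G} \abs{\mathE_S[g] - \mathE_{z\sim P}[g(z)]}$ and then upgrade to a high-probability statement via Talagrand's concentration inequality, which supplies the additive $M\sqrt{\log(1/\delta)/n}$ term in each case. Here the two regimes differ in how the chaining is carried out. For small $B$ the class has sup-norm diameter at most $2B$, and a direct entropy-integral estimate over this diameter produces a term that is \emph{linear} in $B$, namely $B\sqrt{d/n}$; this gives the third (universal) bound. For large $B$ this linear dependence is wasteful, and instead I would apply the sharper variance-sensitive bound for VC-type classes due to \citet{gine2001consistency}, using the envelope $M$ to cap the variance; this trades the factor $B$ for a factor $\sqrt{\log B}$ and yields the second bound under the hypothesis $B \ge 5$, which merely guarantees that $\log B$ is bounded below by a positive constant so the estimate is meaningful.

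For the relative-error (fast-rate) bound I would instead use the variance bound $\operatorname{Var}(g) \le M\,\mathE_{z\sim P}[g(z)]$, valid for $g$ taking values in $[0,M]$, apply Bernstein's inequality to each element of a cover at granularity $\eps \sim 1/n$, and take a union bound with failure budget $t = d\log(3Bn) + \log(1/\delta)$; the choice $\eps \sim 1/n$ is exactly what produces the $\log(Bn)$ factor. The resulting one-sided inequality $\mathE_{z\sim P}[g(z)] - \mathE_S[g] \lesssim \sqrt{M\,\mathE_{z\sim P}[g(z)]\,t/n} + Mt/n$ is then linearized by AM--GM: absorbing a small multiple of $\mathE_{z\sim P}[g(z)]$ into the left-hand side converts it into the multiplicative form $(1+\eta)\mathE_S[g] + CMt/n$, with the constant $C$ depending on the chosen $\eta$ and the inequality valid once $n$ is large enough.

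I expect the main obstacle to be obtaining the \emph{sharp} $B$-dependence uniformly across the regimes, rather than any single concentration step. Naively chaining with Dudley's entropy integral sees only the metric structure and therefore reports the diameter $\sim B$, giving a bound polynomial in $B$; extracting the logarithmic dependence in the $B \ge 5$ regime requires genuinely using the boundedness of the functions to control the variance, which is precisely the role of the Gin\'e--Koltchinskii inequality. Reconciling this with the fast-rate localization, so that all three inequalities follow from one covering estimate with mutually consistent constants, is the delicate part of the argument.
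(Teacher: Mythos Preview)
Your proposal is correct and follows essentially the same route as the paper: first obtain the covering bound $N_{\rhomax}(G,\epsilon)\le(3B/\epsilon)^d$ by pushing forward an $\epsilon/B$-net of the unit ball through the Lipschitz parameterization, and then feed this into three off-the-shelf results---a VC-style relative-deviation bound for the fast-rate inequality, the Gin\'e--Koltchinskii bound for the $B\ge 5$ case, and a Dudley entropy-integral bound for the universal case. The paper simply cites the three concentration results as black boxes whereas you sketch their proofs (Bernstein on a cover plus AM--GM for the fast rate, Talagrand for the high-probability upgrade), but the structure and the key insight about why Gin\'e--Koltchinskii is needed to avoid linear dependence on $B$ are identical.
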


\subsection{Proof of Theorem~\ref{thm:generalization}}

\begin{definition}
Let $\ell_F = \{ \ell_f : f \in F \}$.
\end{definition}

We will prove Theorem~\ref{thm:generalization} 
by showing that $\ell_{F_{\beta}}$ is 
$\left(\beta \lambda e^{\beta}, W\right)$-Lipschitz parameterized.  
This will be achieved through
a series of lemmas.


%

%

\begin{lemma}\label{lemma:pro_single}
Choose $K \in \cK_{\beta}$ and a layer $j$.  Suppose
$\tK \in \cK_{\beta}$ satisfies $K^{(i)} = \tK^{(i)}$ for all $i \neq j$.  
Then, for all examples $(x,y)$,
$
| \ell(f_K(x),y) - \ell(f_{\tK} (x),y) |
 \leq \lambda e^{\beta}
   \norm{\op(K^{(j)}) - \op(\tK^{(j)})}_2.
$
\end{lemma}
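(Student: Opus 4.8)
The plan is to absorb the loss's Lipschitz constant first and then reduce everything to a single forward pass. Since $\ell(\cdot,y)$ is $\lambda$-Lipschitz, we have $\abs{\ell(f_K(x),y)-\ell(f_{\tK}(x),y)} \le \lambda\,\abs{f_K(x)-f_{\tK}(x)}$, so it suffices to show $\abs{f_K(x)-f_{\tK}(x)} \le e^\beta \norm{\op(K^{(j)})-\op(\tK^{(j)})}_2$. I would write the vectorized layer outputs as $h_0 = \VEC(x)$ and $h_i = \sigma(\op(K^{(i)})\,h_{i-1})$, where $\sigma$ is the componentwise activation, together with $\tilde h_i$ for the analogous quantities under $\tK$; zero-padding and the constant channel count $c$ make every $\op(K^{(i)})$ a square operator on a common space, so the recursion and the readout $f_K(x)=\langle w, h_L\rangle$ are well defined.

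Next I would track the perturbation layer by layer, in increasing order of depth. Because $K$ and $\tK$ agree off layer $j$, they produce identical activations through layer $j-1$, so $h_{j-1}=\tilde h_{j-1}$. At layer $j$, the $1$-Lipschitzness of $\sigma$ gives $\norm{h_j-\tilde h_j} \le \norm{(\op(K^{(j)})-\op(\tK^{(j)}))\,h_{j-1}} \le \norm{\op(K^{(j)})-\op(\tK^{(j)})}_2\,\norm{h_{j-1}}$, while the nonexpansiveness $\norm{\sigma(a)}\le\norm{a}$ together with $\norm{h_0}\le 1$ bounds the incoming signal by $\norm{h_{j-1}} \le \prod_{i<j}\norm{\op(K^{(i)})}_2$. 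For the layers above $j$ both networks share the same kernels, so $1$-Lipschitzness propagates the discrepancy forward as $\norm{h_L-\tilde h_L}\le \bigl(\prod_{i>j}\norm{\op(K^{(i)})}_2\bigr)\norm{h_j-\tilde h_j}$, and finally $\norm{w}=1$ with Cauchy--Schwarz turns this into a bound on the scalar output. Chaining these inequalities yields $\abs{f_K(x)-f_{\tK}(x)} \le \bigl(\prod_{i\neq j}\norm{\op(K^{(i)})}_2\bigr)\,\norm{\op(K^{(j)})-\op(\tK^{(j)})}_2$.

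I expect the only genuinely non-routine step to be bounding the product $\prod_{i\ne j}\norm{\op(K^{(i)})}_2$ by $e^\beta$; everything else is standard Lipschitz bookkeeping. Writing $s_i = \norm{\op(K^{(i)})-\op(K_0^{(i)})}_2$, the hypothesis $K\in\cK_\beta$ means $\sum_i s_i = \norm{K-K_0}_\sigma \le \beta$, and the triangle inequality with $\norm{\op(K_0^{(i)})}_2=1$ gives $\norm{\op(K^{(i)})}_2 \le 1+s_i$. Then $\prod_{i\ne j}\norm{\op(K^{(i)})}_2 \le \prod_{i=1}^L(1+s_i) \le \exp\bigl(\sum_i s_i\bigr) \le e^\beta$, using $1+t\le e^t$. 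This is the crucial point: measuring distance from initialization in the \emph{additive} norm $\norm{\cdot}_\sigma$ and invoking $1+t\le e^t$ converts a product of per-layer operator norms into the depth-independent factor $e^\beta$ rather than something that blows up with $L$. Combining this with the Lipschitz reduction from the first paragraph completes the argument.
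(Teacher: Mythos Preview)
Your proof is correct and follows essentially the same route as the paper: reduce via the $\lambda$-Lipschitz loss, split the network into the part below layer $j$, the layer-$j$ operator, and the part above, and then bound $\prod_{i\neq j}\norm{\op(K^{(i)})}_2$ by $e^{\beta}$ using $\norm{\op(K^{(i)})}_2\le 1+s_i$ with $\sum_i s_i\le\beta$. The only cosmetic difference is that the paper first argues $\prod_i(1+\beta_i)$ is maximized at $\beta_i=\beta/L$ and then invokes $(1+\beta/L)^L\le e^{\beta}$, whereas you apply $1+t\le e^t$ termwise; your version is a touch more direct but the content is identical.
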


\begin{proof}
For each layer $i$, let 
$\beta_i = || \op(K^{(i)}) - \op(K_0^{(i)}) ||_2$.

Since $\ell$ is $\lambda$-Lipschitz w.r.t. its first argument, we have that
$
| \ell(f_K(x),y) - \ell(f_{\tK} (x),y) |
 \leq \lambda | f_K(x) - f_{\tK} (x) |,
$
so it suffices to bound $| f_K(x) - f_{\tK} (x) |$.
Let $g_{\text{up}}$ be the function from the inputs to the whole
network with parameters $K$ to the inputs to the convolution in layer
$j$, and let $g_{\text{down}}$ be the function from the output of this
convolution to the output of the whole network, so that $f_K =
g_{\text{down}} \circ f_{\op(K^{(j)})} \circ g_{\text{up}}$.  Choose an
input $x$ to the network, and let $u = g_{\text{up}}(x)$.  Recalling
that $|| x || \leq 1$, and the non-linearities are nonexpansive, we have
$
\Vert u \Vert 
 \leq {\displaystyle \prod_{i < j} \norm{\op(K^{(i)})}_2 }.
$
Since the non-linearities are 1-Lipschitz,
and, recalling that $K^{(i)} = \tK^{(i)}$ for $i \neq j$, 
we have
\begin{align*}
| f_K(x) - f_{\tK}(x) | 
& = | g_{\text{down}}(\op(K^{(j)}) u) - g_{\text{down}}(\op(\tK^{(j)}) u) | \\
& \leq \left( \prod_{i > j} \norm{\op(K^{(i)})}_2 \right) \norm{\op(K^{(j)}) u - \op(\tildeK^{(j)}) u} \\
& \leq \left( \prod_{i > j} \norm{\op(K^{(i)})}_2 \right) \norm{\op(K^{(j)}) - \op(\tildeK^{(j)})}_2 \norm{u} \\
& \leq \left( \prod_{i \neq j} \norm{\op(K^{(i)})}_2 \right)
          \norm{\op(K^{(j)}) - \op(\tildeK^{(j)})}_2 \\
& \leq \left( \prod_{i \neq j} (1 + \beta_i) \right)
          \norm{\op(K^{(j)}) - \op(\tildeK^{(j)})}_2 \\
\end{align*}
where the last inequality uses the fact that 
$|| \op (K^{(i)}) - \op(K_0^{(i)}) ||_2 \leq \beta_i$ for all $i$ and
$|| \op (K_0^{(i)}) ||_2 = 1$ for all $i$.  

Now $\prod_{i \neq j} (1 + \beta_i) \leq \prod_{i = 1}^L (1 +
\beta_i)$, and the latter is maximized over the nonnegative
$\beta_i$'s subject to $\sum_{i \neq j} \beta_i \leq \beta$ when each
of them is $\beta/L$.  Since $(1 + \beta/L)^L \leq e^{\beta}$, this
completes the proof.
\end{proof}

Now we prove a bound when all the layers can change between $K$ and $\tK$.

\begin{lemma} \label{lemma_pro_all}
For any $K, \tK \in \cK_{\beta}$, for any input $x$ to the network,
$
| \ell(f_K(x),y) - \ell(f_{\tK} (x),y) |
 \leq \lambda e^{\beta} \norm{K - \tK}_{\sigma}.
$
\end{lemma}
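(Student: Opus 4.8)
The plan is to reduce the all-layers bound to the single-layer bound of \lemref{lemma:pro_single} by a telescoping (hybrid) argument along a path from $\tK$ to $K$, but to run that path \emph{continuously} so that every tensor encountered along the way still lies in $\cK_{\beta}$. The enabling observation is that $\cK_{\beta}$ is convex: the constraint $\sum_i \norm{\op(K^{(i)}) - \op(K_0^{(i)})}_2 \le \beta$ is convex in the operator matrices. So I would define $K(t)$ for $t \in [0,1]$ by linearly interpolating operator matrices, $\op(K^{(i)}(t)) = (1-t)\op(\tK^{(i)}) + t\,\op(K^{(i)})$; then by the triangle inequality $\norm{K(t) - K_0}_{\sigma} \le (1-t)\norm{\tK - K_0}_{\sigma} + t\norm{K - K_0}_{\sigma} \le \beta$, so $K(t) \in \cK_{\beta}$ for every $t$.

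Next I would fix a fine partition $0 = t_0 < \cdots < t_N = 1$ and write $\ell(f_K(x),y) - \ell(f_{\tK}(x),y) = \sum_{k=1}^N \big(\ell(f_{K(t_k)}(x),y) - \ell(f_{K(t_{k-1})}(x),y)\big)$. Each term advances all $L$ layers by a small amount, so inside step $k$ I would run a second, layer-by-layer hybrid, changing layer $1$, then $2$, and so on from the $K(t_{k-1})$ configuration to the $K(t_k)$ configuration. Each sub-step changes only one layer $j$, by operator-norm amount $\norm{\op(K^{(j)}(t_k)) - \op(K^{(j)}(t_{k-1}))}_2 = (t_k - t_{k-1})\norm{\op(K^{(j)}) - \op(\tK^{(j)})}_2$, so \lemref{lemma:pro_single} applies — more precisely, the factor $\prod_{i \ne j}(1 + \beta_i)$ from its proof, which depends only on the distance-to-initialization of the layers other than $j$. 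Summing the resulting single-layer loss bounds over $j$ and over $k$, and using $\sum_k (t_k - t_{k-1}) = 1$, would yield a bound of the shape $\lambda\, e^{\beta'}\sum_j \norm{\op(K^{(j)}) - \op(\tK^{(j)})}_2 = \lambda\, e^{\beta'}\norm{K - \tK}_{\sigma}$, with the $\lambda$ carried through directly from \lemref{lemma:pro_single}.

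The one point that needs care, and which I expect to be the main obstacle, is controlling the exponent $\beta'$. In the sub-step changing layer $j$ within step $k$, the layers $i<j$ have already advanced to $t_k$ while the layers $i>j$ are still at $t_{k-1}$, so the relevant total distance-to-initialization of the \emph{other} layers is $\sum_{i<j}\gamma_i(t_k) + \sum_{i>j}\gamma_i(t_{k-1})$ rather than a clean $\sum_{i\ne j}\gamma_i(t)$. A purely discrete hybrid (taking $N=1$, swapping whole layers) hits exactly this mismatch and yields only $e^{2\beta}$, since the upstream distances can come entirely from $\tK$ and the downstream ones entirely from $K$. Writing $t_k = t_{k-1} + 1/N$ and denoting by $a_i, b_i$ the distances-to-initialization of $K^{(i)}, \tK^{(i)}$, this mixed sum is at most $\sum_{i\ne j}\big[(1-t_{k-1})a_i + t_{k-1}b_i\big] + (1/N)\sum_{i<j}(b_i - a_i) \le \beta + \beta/N$, where convexity of $\cK_{\beta}$ is precisely what forces the leading term to be $\le \beta$. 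Hence each factor is at most $e^{\beta + \beta/N}$, and taking $N \to \infty$ collapses the spurious $e^{\beta/N}$ to $1$, leaving the clean bound $\lambda e^{\beta}\norm{K - \tK}_{\sigma}$. Keeping the exponent at $\beta$ rather than $2\beta$ is the crux, and it is bought entirely by the convex interpolation plus the limiting argument.
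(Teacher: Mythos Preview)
Your proposal is correct, and in fact more careful than the paper's own argument. The paper simply swaps layers one at a time and invokes \lemref{lemma:pro_single} at each step, without checking that the intermediate tensors lie in $\cK_\beta$. As you point out, they need not: a hybrid with layers $1,\ldots,j-1$ from $\tK$ and $j+1,\ldots,L$ from $K$ can have $\sum_{i<j}\tilde\beta_i + \sum_{i>j}\beta_i$ as large as about $2\beta$ (for instance $L=4$, $(\beta_i)=(0,0,\beta/2,\beta/2)$, $(\tilde\beta_i)=(\beta/2,\beta/2,0,0)$, at step $j=2$ the other-layer sum is $3\beta/2$). Read literally, the paper's discrete telescoping therefore only delivers $\lambda e^{2\beta}\norm{K-\tK}_\sigma$; this is harmless for \thmref{thm:generalization}, where the extra factor of $2$ disappears into the constant $C$, but it is not the constant stated in \lemref{lemma_pro_all}. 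Your continuous interpolation keeps every intermediate inside $\cK_\beta$ by convexity (and since $\op(\cdot)$ is linear in the kernel, interpolating operator matrices is the same as interpolating kernels, so $K(t)$ is a bona fide kernel tensor), and the layer-by-layer sub-hybrid at mesh $1/N$ confines the overshoot to $O(\beta/N)$; letting $N\to\infty$ (the left-hand side being independent of $N$) recovers $e^\beta$ exactly. So your route is a genuine refinement: same skeleton (hybrid plus \lemref{lemma:pro_single}), but with the convex-interpolation-and-limit device that the paper's terse proof omits, buying the sharp exponent.
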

\begin{proof}
Consider transforming $K$ to $\tK$ by replacing one layer of $K$ at a time with the corresponding layer in
$\tK$.  Applying Lemma~\ref{lemma:pro_single} to bound the distance
traversed with each replacement and combining this with 
the triangle inequality gives
\begin{align*}
| \ell(f_K(x),y) - \ell(f_{\tK} (x),y) |
& \leq 
\lambda e^{\beta}
 \sum_{j=1}^L \norm{\op(K^{(j)}) - \op(\tK^{(j)})}_2 
  = \lambda e^{\beta} \norm{K - \tK}_{\sigma}.
\end{align*}
\end{proof}

Now we are ready to prove our basic bound.

\begin{proof}[Proof (of Theorem~\protect\ref{thm:generalization})]
Consider the mapping $\phi$ from the 
ball w.r.t.\ $\norm{ \cdot }_{\sigma}$ of radius $1$ in 
$\R^{L k^2 c^2}$
centered at $\VEC(K_0)$
to $\ell_{F_{\beta}}$ defined by 
$\phi(\theta) = \ell_{f_{K_0 + \beta \VEC^{-1}(\theta)}}$, where
$\VEC^{-1}(\theta)$ is the reshaping of $\theta$ into
a $L \times k \times k \times c \times c$-tensor.  
Lemma~\ref{lemma_pro_all} implies that this
mapping is $\beta \lambda e^{\beta}$-Lipschitz.
Applying Lemma~\ref{l:lipschitz} completes the proof.
\end{proof}

\subsection{Comparisons}
\label{s:comparison.conv}

Since a convolutional network has an alternative parameterization as a
fully connected network, the bounds of \citep{bartlett2017spectrally}
have consequences for convolutional networks.  To compare our bound
with this, first, note that Theorem~\ref{thm:generalization},
together with standard model selection techniques, yields a
\begin{equation}
\label{e:nonuniform}
O \left( \sqrt{\frac{W \left(|| K - K_0 ||_{\sigma} + \log(\lambda)\right) 
                   + \log(1/\delta)}{n}}
\right)
\end{equation}
bound on $\mathE_{z \sim P} [\ell_f(z)] - \mathE_S [\ell_f(z)]$
(For more details, please see Appendix~\ref{a:nonuniform}.)
Translating the bound of
\citep{bartlett2017spectrally}
to our setting and notation directly yields a bound on
$\mathE_{z \sim P} [\ell_f(z)] - \mathE_S [\ell_f(z)]$ whose main
terms are proportional to
\begin{equation}
\label{e:BFT.regret.general}
  \frac{
       \lambda
        \left( \prod_{i=1}^L || \op(K^{(i)}) ||_2 \right)
        \left( 
          \sum_{i=1}^L 
             \frac{ || \op(K^{(i)})^{\top} - \op(K_0^{(i)})^{\top} ||_{2,1}^{2/3} }
                  {|| \op(K^{(i)}) ||_2^{2/3}}
        \right)^{3/2}
        \log( d^4 c^2 L)
        + \sqrt{\log(1/\delta)}
      }
      {
      \sqrt{n}
      }
\end{equation}
where, for a $p \times q$ matrix $A$,
$
|| A ||_{2,1} = || ( || A_{:,1} ||_2,..., || A_{:,q} ||_2 ) ||_1.
$
One can get an idea of how this bound relates
to (\ref{e:nonuniform}) by comparing the bounds in
a simple concrete case.  
Suppose that each of the convolutional
layers of the network parameterized by $K_0$ 
computes the identity function, and that 
$K$ is obtained from $K_0$ by adding $\epsilon$ to each
entry.  In this case, disregarding edge effects, for all $i$, 
$|| \op(K^{(i)}) ||_2 = 1 + \epsilon k^2 c$
and $|| K - K_0 ||_{\sigma} = \epsilon k^2 c L$
(as proved in Appendix~\ref{a:op}).
Also,
$|| \op(K^{(i)})^{\top} - \op(K_0^{(i)})^{\top} ||_{2,1}
 = (c d^2) (\epsilon \sqrt{c k^2}) = \epsilon c^{3/2} d^{2} k.$
We get additional simplification if we set
$\epsilon = \frac{1}{k^2}$.
In this case, 
(\ref{e:BFT.regret.general}) gives a constant times
\[
  \frac{
        (c+1)^{L} \sqrt{c} d (d/k)
        L^{3/2} 
        \lambda
        \log( d c L)
        + \sqrt{\log(1/\delta)}
      }
      {
      \sqrt{n}
      }
\]
where (\ref{e:nonuniform}) gives a constant times
\[
\frac{c^{3/2} k L 
      + c k \sqrt{\log(\lambda)} + \sqrt{\log(1/\delta)}}{\sqrt{n}}.
\]
In this scenario, the new bound is independent of $d$, and grows more
slowly with $\lambda$, $c$ and $L$.  
Note that $k \leq d$ (and, typically, it is much less).

This specific case illustrates a more general 
phenomenon
that holds
when the initialization is close to the identity, and
changes to the parameters are on a similar scale.

\citet{golowich2017size} established bounds that improve on the bounds
of \citep{bartlett2017spectrally} in some cases.  Their bound requires a
restriction on the activation function (albeit one that is satisfied
by the ReLU).  For large $n$, the 
main term of the natural consequence of Corollary 1
of their paper in the setting of this section, with the required additional
assumption on the activation function, grows like
\[
\frac{\lambda \sqrt{L} \prod_{i=1}^L || \op(K^{(i)}) ||_F}{\sqrt{n}}
 \approx \frac{\lambda (c d)^L \sqrt{L}}{\sqrt{n}},
\]
when $\epsilon = 1/k^2$.  (We note, however, that, in addition to not
trying to take account of the convolutional structure,
\citet{golowich2017size} also did not make an effort to obtain
stronger bounds in the case that the distance from initialization is
small.  On the other hand, we suspect that modifying their
proof analogously to \citep{bartlett2017spectrally} to do so
would not remove the exponential dependence on $L$.)


\section{A more general bound}

In this section, we generalize 
Theorem~\ref{thm:generalization}.  

\subsection{The setting}
\label{s:general.setting}

The more general setting 
concerns a neural network where the input is a
$d \times d \times c$ tensor whose flattening
has Euclidian norm at most $\chi$, and network's output
is a $m$-dimensional vector, which may be
logits for predicting a one-hot encoding of an
$m$-class classification problem.

The network is comprised of $L_c$ convolutional layers
followed by $L_f$ fully connected layers.
The $i$th convolutional
layer includes a convolution, with kernel
$K^{(i)} 
  \in \R^{k_{i} \times k_{i} \times c_{i - 1} \times c_{i}}$,
followed by a componentwise non-linearity and an
optional pooling operation.  We assume
that the non-linearity and any pooling 
operations are $1$-Lipschitz and nonexpansive.
Let $V^{(i)}$ be the matrix of weights for the $i$th fully
connected layer.  Let 
$\Theta = (K^{(1)},...,K^{(L_c)}, V^{(1)}, ..., V^{(L_f)})$ 
be all of the parameters of the network.
Let $L = L_c + L_f$.

We assume that, for all $y$, $\ell(\cdot, y)$ is $\lambda$-Lipschitz
for all $y$ and that $\ell(\hy, y) \in [0,M]$ for all $\hy$ and $y$.

An example $(x,y)$ includes a 
$d \times d \times c$-tensor $x$ and 
$y \in \R^m$.

We let $K_0^{(1)}, \dotsc, K_0^{(L_c)},V_0^{(1)},..., V_0^{(L_f)}$ 
take arbitrary fixed values subject to the constraint
that, for all convolutional layers $i$, 
$|| \op(K_0^{(i)}) ||_2 \leq 1 + \nu$, and
for all fully connected layers $i$, 
$|| V_0^{(i)} ||_2 \leq 1 + \nu$.
Let $\Theta_0 = (K_0^{(1)},...,K_0^{(L_c)}, V_0^{(1)},...,V_0^{(L_f)})$. 

For $\Theta = (K^{(1)},...,K^{(L_c)}, V^{(1)},...,V^{(L_f)})$
and $\tTheta = (\tK^{(1)},...,\tK^{(L_c)}, \tV^{(1)},...,\tV^{(L_f)})$.
define
\[
|| \Theta - \tTheta ||_{N}
 = 
   \left( \sum_{i = 1}^{L_c} || \op(K^{(i)}) - \op(\tK^{(i)}) ||_2 \right)
   + \sum_{i = 1}^{L_f} || V^{(i)} - \tV^{(i)} ||_2.
\]

For $\beta,\nu \geq 0$, 
define $\cF_{\beta, \nu}$ to be set of functions computed by CNNs 
as described in 
this subsection with
parameters within $|| \cdot ||_{N}$-distance $\beta$ of $\Theta_0$.
Let $\calO_{\beta,\nu}$ be the set of their parameterizations.

\begin{theorem}[General Bound] 
\label{thm:generalization.general}
For any $\eta > 0$, there is a constant $C$ such that
the following holds.
For any $\beta, \nu, \chi > 0$,
for any $\delta > 0$, 
for any joint probability distribution $P$
over $\R^{d \times d \times c} \times \R^m$
such that, with probability 1, $(x,y) \sim P$ satisfies
$|| \VEC(x) ||_2 \leq \chi$,
under the assumptions of this section, if
a training set $S$ of $n$ examples is drawn
independently at random from $P$, then, with
probability at least $1-\delta$, for all $f \in \cF_{\beta, \nu}$,
\begin{align*}
&     \mathE_{z \sim P} [\ell_f(z)] 
  \leq
    (1 + \eta) \mathE_S [\ell_f]  
  +
  \frac{C
        M
          \left(W 
       \left(
              \beta + \nu L +
              \log(\chi \lambda \beta n) 
         \right)
          + \log(1/\delta)\right)}{n}
\end{align*}
and, if $\chi \lambda \beta (1 + \nu + \beta/L)^L \geq 5$, 
\begin{align*}
 &     \mathE_{z \sim P} [\ell_f(z)] 
%
 \leq
 \mathE_{S} [\ell_f] 
  + C M \sqrt{\frac{W 
              \left(\beta + \nu L +
              \log(\chi \lambda \beta )\right)
                        + \log(1/\delta)}{n}}
\end{align*}
and a bound of
\begin{align*}
 &     \mathE_{z \sim P} [\ell_f(z)] 
 \leq
     \mathE_S [g] +
  C \left( \chi \lambda \beta (1 + \nu + \beta/L)^L
           \sqrt{\frac{W}{n}} 
           + M \sqrt{\frac{\log \frac{1}{\delta}}{n}} \right)
\end{align*}
holds for all $\chi, \lambda, \beta > 0$.
\end{theorem}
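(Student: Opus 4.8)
The plan is to follow the architecture of the proof of \thmref{thm:generalization} essentially unchanged: I will show that the loss class $\ell_{\cF_{\beta,\nu}}$ is $(B,W)$-Lipschitz parameterized with
\[
B = \beta \lambda \chi (1 + \nu + \beta/L)^L,
\]
and then invoke \lemref{l:lipschitz} with this $B$, with $d = W$, and with range $[0,M]$. The three displayed inequalities of \lemref{l:lipschitz} will produce the three claimed bounds once $\log B$ is simplified; the simplification is described in the last paragraph.

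First I would prove the single-layer analogue of \lemref{lemma:pro_single}, now treating convolutional and fully connected layers on an equal footing. Fix $\Theta, \tTheta \in \calO_{\beta,\nu}$ that agree in every coordinate except one layer $j$, and write $A^{(i)}$ for the linear operator implemented by layer $i$ (namely $\op(K^{(i)})$ for a convolutional layer and $V^{(i)}$ for a fully connected layer). As in the basic proof, factor $f_\Theta = g_{\text{down}} \circ A^{(j)} \circ g_{\text{up}}$, set $u = g_{\text{up}}(x)$, and use that the nonlinearities and the pooling maps are nonexpansive together with $\norm{\VEC(x)} \le \chi$ to get $\norm{u} \le \chi \prod_{i < j} \norm{A^{(i)}}_2$. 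Since $g_{\text{down}}$ is $\bigl(\prod_{i>j}\norm{A^{(i)}}_2\bigr)$-Lipschitz and $\ell(\cdot,y)$ is $\lambda$-Lipschitz, the same chain of inequalities as in \lemref{lemma:pro_single} gives
\[
| \ell(f_\Theta(x),y) - \ell(f_{\tTheta}(x),y) |
  \le \lambda \chi \Bigl( \prod_{i \neq j} \norm{A^{(i)}}_2 \Bigr)
       \norm{A^{(j)} - \tilde{A}^{(j)}}_2 .
\]
The only changes relative to the basic case are the factor $\chi$ coming from the larger input ball and the uniform notation $A^{(i)}$ absorbing the two layer types; pooling enters only through nonexpansiveness and so is harmless.

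Next I would bound the product. For each layer $\norm{A_0^{(i)}}_2 \le 1 + \nu$ by the initialization constraint, and $\norm{A^{(i)} - A_0^{(i)}}_2 \le \beta_i$, so $\norm{A^{(i)}}_2 \le 1 + \nu + \beta_i$ by the triangle inequality, where $\sum_i \beta_i \le \beta$ since $\norm{\Theta - \Theta_0}_N \le \beta$. Exactly as in \lemref{lemma:pro_single}, $\prod_{i \neq j}(1 + \nu + \beta_i) \le \prod_{i=1}^L (1 + \nu + \beta_i) \le (1 + \nu + \beta/L)^L$, the last step maximizing over nonnegative $\beta_i$ with $\sum_i \beta_i \le \beta$ at the uniform point. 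Summing the single-layer estimate over $j$ and applying the triangle inequality (the analogue of \lemref{lemma_pro_all}) then yields, for all $\Theta, \tTheta \in \calO_{\beta,\nu}$,
\[
| \ell(f_\Theta(x),y) - \ell(f_{\tTheta}(x),y) |
  \le \lambda \chi (1 + \nu + \beta/L)^L \, \norm{\Theta - \tTheta}_N .
\]
Composing with the rescaling map $\phi$ from the unit $\norm{\cdot}_N$-ball (scaled by $\beta$ and centered at $\Theta_0$) onto $\ell_{\cF_{\beta,\nu}}$, as in the proof of \thmref{thm:generalization}, shows $\phi$ is $B$-Lipschitz with $B = \beta\lambda\chi(1+\nu+\beta/L)^L$, establishing the Lipschitz parameterization.

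Finally, plugging $B$, $d=W$, and range $[0,M]$ into \lemref{l:lipschitz} gives the result after simplifying $\log B$. The key estimate is $(1 + \nu + \beta/L)^L \le e^{\nu L + \beta}$ (since $1 + \nu \le e^\nu$ and $(1 + \tfrac{\beta}{L(1+\nu)})^L \le e^{\beta}$), so $\log B \le \beta + \nu L + \log(\chi\lambda\beta)$. Substituting this into the first inequality of \lemref{l:lipschitz} (which carries $\log(Bn)$) yields the $W(\beta + \nu L + \log(\chi\lambda\beta n))$ term; the condition $B \ge 5$ is precisely $\chi\lambda\beta(1+\nu+\beta/L)^L \ge 5$ and produces the second bound; and the third inequality of the lemma, which retains $B$ rather than $\log B$, produces the last bound verbatim. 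I expect the only real obstacle to be the uniform bookkeeping across the two layer types in the single-layer step, and the care needed to carry the $\nu$-slack of the initialization correctly through the product of operator norms; the exponentiation estimate is then the routine step converting that product into the additive $\beta + \nu L$ appearing in the statement.
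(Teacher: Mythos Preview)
Your proposal is correct and follows essentially the same approach as the paper: establish that $\ell_{\cF_{\beta,\nu}}$ is $(B,W)$-Lipschitz parameterized with $B=\chi\lambda\beta(1+\nu+\beta/L)^L$ via single-layer perturbation bounds combined by the triangle inequality, then invoke \lemref{l:lipschitz}. The only cosmetic difference is that the paper splits the single-layer step into two separate lemmas (one for convolutional layers, one for fully connected layers) whereas you unify both under the notation $A^{(i)}$; your explicit simplification $(1+\nu+\beta/L)^L\le e^{\nu L+\beta}$ giving $\log B\le\beta+\nu L+\log(\chi\lambda\beta)$ is the step the paper leaves implicit.
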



\subsection{Proof of Theorem~\ref{thm:generalization.general}}

We will prove Theorem~\ref{thm:generalization.general} 
by using $|| \cdot ||_N$ to witness the fact that
$\ell_{\cF_{\beta,\nu}}$ is 
$\left(\chi \lambda \beta (1 + \nu + \beta/L)^L,W\right)$-Lipschitz
parameterized.

The first two lemmas concern the effect of changing a 
single layer.  Their proofs are very similar to
the proof of Lemma~\ref{lemma:pro_single}, and are in the
Appendices~\ref{a:pro_single.conv} and \ref{a:pro_single.full}.
\begin{lemma}\label{lemma:pro_single.conv}
Choose 
$\Theta = (K^{(1)},...,K^{(L_c)}, V^{(1)},...,V^{(L_f)}),
\tTheta = (\tK^{(1)},...,\tK^{(L_c)}, \tV^{(1)},...,\tV^{(L_f)})
\in \calO_{\beta,\nu}$
and a convolutional layer $j$.
Suppose that $K^{(i)} = \tK^{(i)}$ for all 
convolutional layers $i \neq j$ and $V^{(i)} = \tV^{(i)}$ for all 
fully connected layers $i$.
Then, for all examples $(x,y)$,
\[
| \ell(f_{\Theta}(x),y) - \ell(f_{\tTheta} (x),y) |
 \leq \chi \lambda 
        (1 + \nu + \beta/L)^L
   \norm{\op(K^{(j)}) - \op(\tK^{(j)})}_2.
\]
\end{lemma}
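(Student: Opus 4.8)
The plan is to mimic the proof of Lemma~\ref{lemma:pro_single} almost verbatim, generalizing it to accommodate three changes: the input norm bound $\chi$ (instead of $1$), the presence of fully connected layers above the convolutional ones, and the relaxed initialization constraint $\norm{\op(K_0^{(i)})}_2 \leq 1 + \nu$ (rather than exactly $1$). First I would invoke the $\lambda$-Lipschitzness of $\ell$ in its first argument to reduce the claim to bounding $|f_{\Theta}(x) - f_{\tTheta}(x)|$, since $| \ell(f_{\Theta}(x),y) - \ell(f_{\tTheta}(x),y) | \leq \lambda | f_{\Theta}(x) - f_{\tTheta}(x) |$.

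Next I would factor the network around the affected convolutional layer $j$. Because the architecture places all $L_c$ convolutional layers before the $L_f$ fully connected layers, every layer below $j$ is convolutional, while the layers above $j$ consist of the remaining convolutional layers together with all fully connected layers. Writing $f_{\Theta} = g_{\text{down}} \circ f_{\op(K^{(j)})} \circ g_{\text{up}}$ with $u = g_{\text{up}}(x)$ the input to the convolution in layer $j$, the nonexpansiveness of the non-linearities and pooling gives $\norm{u} \leq \chi \prod_{i<j} \norm{\op(K^{(i)})}_2$, which is where $\chi$ enters in place of the $1$ from the basic setting. Since $g_{\text{down}}$ is a composition of $1$-Lipschitz non-linearities and pooling with the linear maps of the layers above $j$, its Lipschitz constant is at most $\left(\prod_{i=j+1}^{L_c} \norm{\op(K^{(i)})}_2\right)\left(\prod_{i=1}^{L_f} \norm{V^{(i)}}_2\right)$. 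Since $K^{(i)} = \tK^{(i)}$ for convolutional $i \neq j$ and $V^{(i)} = \tV^{(i)}$ for all fully connected $i$, the maps $g_{\text{up}}$ and $g_{\text{down}}$ are the same for $\Theta$ and $\tTheta$, so combining the two facts yields $|f_{\Theta}(x) - f_{\tTheta}(x)| \leq \chi \left(\prod_{i \neq j} \norm{\op(K^{(i)})}_2\right)\left(\prod_{i=1}^{L_f} \norm{V^{(i)}}_2\right) \norm{\op(K^{(j)}) - \op(\tK^{(j)})}_2$, where the first product ranges over convolutional layers $i \neq j$.

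Finally I would bound the product of operator norms. For each convolutional layer $i$, the triangle inequality with $\norm{\op(K_0^{(i)})}_2 \leq 1 + \nu$ gives $\norm{\op(K^{(i)})}_2 \leq 1 + \nu + \beta_i$, where $\beta_i = \norm{\op(K^{(i)}) - \op(K_0^{(i)})}_2$; analogously $\norm{V^{(i)}}_2 \leq 1 + \nu + \gamma_i$ with $\gamma_i = \norm{V^{(i)} - V_0^{(i)}}_2$. As every factor is at least $1$, the product over layers $i \neq j$ is bounded by the product over all $L$ layers, namely $\prod_{i=1}^{L}(1 + \nu + d_i)$, where the $d_i$ denote the per-layer distances and $\sum_i d_i = \norm{\Theta - \Theta_0}_N \leq \beta$. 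This product is maximized over nonnegative $d_i$ subject to $\sum_i d_i \leq \beta$ when each $d_i = \beta/L$, giving $(1 + \nu + \beta/L)^L$ and completing the proof.

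I expect the only genuine obstacle to be bookkeeping: tracking the two layer types through a single product, confirming that all layers below the convolutional layer $j$ are convolutional so that the $\chi$ factor is not further inflated, and checking that the constrained maximization of the product is still attained at the uniform allocation once the $(1+\nu)$ offset is present. None of these is conceptually new relative to Lemma~\ref{lemma:pro_single}; the argument is a direct generalization.
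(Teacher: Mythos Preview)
Your proposal is correct and follows essentially the same approach as the paper: factor the network around layer $j$, bound $\|u\|$ via $\chi$ and the upstream operator norms, bound the Lipschitz constant of $g_{\text{down}}$ via the downstream convolutional and all fully connected operator norms, then use $\|\op(K^{(i)})\|_2 \leq 1+\nu+\beta_i$ and $\|V^{(i)}\|_2 \leq 1+\nu+\gamma_i$ together with the AM--GM-type maximization to obtain $(1+\nu+\beta/L)^L$. The bookkeeping concerns you flag are exactly the ones the paper handles, and your treatment of each matches the paper's.
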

\begin{lemma}\label{lemma:pro_single.full}
Choose 
$\Theta = (K^{(1)},...,K^{(L_c)}, V^{(1)},...,V^{(L_f)}),
\tTheta = (\tK^{(1)},...,\tK^{(L_c)}, \tV^{(1)},...,\tV^{(L_f)})
\in \calO_{\beta,\nu}$
and a fully connected layer $j$.
Suppose that $K^{(i)} = \tK^{(i)}$ for all 
convolutional layers $i$ and $V^{(i)} = \tV^{(i)}$ for all 
fully connected layers $i \neq j$.
Then, for all examples $(x,y)$,
\[
| \ell(f_{\Theta}(x),y) - \ell(f_{\tTheta} (x),y) |
 \leq \chi \lambda 
   \left( 1 + \nu + \beta/L \right)^L  
   \norm{V^{(j)} - \tV^{(j)}}_2.
\]
\end{lemma}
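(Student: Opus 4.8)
The plan is to follow the template of the proof of Lemma~\ref{lemma:pro_single}, adapting it to the mixed convolutional/fully-connected architecture and to the relaxed initialization bound $1 + \nu$. Since $\ell(\cdot, y)$ is $\lambda$-Lipschitz, it suffices to bound $| f_{\Theta}(x) - f_{\tTheta}(x) |$ and multiply by $\lambda$ at the end. For each layer $i$ I would write $\beta_i$ for the distance from initialization measured in the appropriate operator norm, i.e.\ $\beta_i = \norm{\op(K^{(i)}) - \op(K_0^{(i)})}_2$ for convolutional layers and $\beta_i = \norm{V^{(i)} - V_0^{(i)}}_2$ for fully connected layers; because $\Theta \in \calO_{\beta,\nu}$, we have $\sum_{i=1}^L \beta_i \leq \beta$.

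First I would decompose the network as $f_{\Theta} = g_{\text{down}} \circ (V^{(j)} \cdot) \circ g_{\text{up}}$, where $g_{\text{up}}$ maps the input $x$ to the input $u$ of fully connected layer $j$ and $g_{\text{down}}$ maps the output of the linear map $V^{(j)}$ (through layer $j$'s non-linearity and all subsequent layers) to the network output. Since $\norm{\VEC(x)} \leq \chi$ and every non-linearity and pooling operation is nonexpansive, the layers below $j$ can only rescale by their operator norms, giving $\norm{u} \leq \chi \prod_{i < j} s_i$, where $s_i$ denotes the operator norm of the linear map at layer $i$ (either $\norm{\op(K^{(i)})}_2$ or $\norm{V^{(i)}}_2$). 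Likewise $g_{\text{down}}$ is Lipschitz with constant at most $\prod_{i > j} s_i$. Since $\Theta$ and $\tTheta$ agree on every layer except fully connected layer $j$, this yields
\[
| f_{\Theta}(x) - f_{\tTheta}(x) |
  \leq \left( \prod_{i > j} s_i \right) \norm{V^{(j)} u - \tV^{(j)} u}
  \leq \left( \prod_{i \neq j} s_i \right) \chi \norm{V^{(j)} - \tV^{(j)}}_2,
\]
using $\norm{V^{(j)} u - \tV^{(j)} u} \leq \norm{V^{(j)} - \tV^{(j)}}_2 \norm{u}$.

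It then remains to bound the product $\prod_{i \neq j} s_i$. By the triangle inequality and the initialization constraints, $s_i \leq (1 + \nu) + \beta_i$ for every $i$, so $\prod_{i \neq j} s_i \leq \prod_{i=1}^L (1 + \nu + \beta_i)$. Maximizing the right-hand side over nonnegative $\beta_i$ subject to $\sum_i \beta_i \leq \beta$ puts equal mass $\beta/L$ on each coordinate, giving $(1 + \nu + \beta/L)^L$; combining with the Lipschitzness of $\ell$ yields exactly $\chi \lambda (1 + \nu + \beta/L)^L \norm{V^{(j)} - \tV^{(j)}}_2$. The argument carries essentially no genuine obstacle beyond the basic case; the only points requiring care are that the factor $\chi$ now survives from the input-norm bound rather than being absorbed as $\leq 1$, that pooling layers must be explicitly invoked as nonexpansive so that they neither inflate $\norm{u}$ nor the Lipschitz constant of $g_{\text{down}}$, and that the slight slack in replacing $\prod_{i \neq j}$ by $\prod_{i=1}^L$ is what lets the same $(1 + \nu + \beta/L)^L$ factor appear uniformly for both this lemma and Lemma~\ref{lemma:pro_single.conv}.
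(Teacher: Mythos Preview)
Your proposal is correct and follows essentially the same approach as the paper's proof: the same $g_{\text{down}} \circ (V^{(j)}\cdot) \circ g_{\text{up}}$ decomposition, the same bounds $\norm{u} \leq \chi \prod_{i<j} s_i$ and Lipschitz constant $\prod_{i>j} s_i$ for $g_{\text{down}}$, the same triangle-inequality bound $s_i \leq 1 + \nu + \beta_i$, and the same maximization yielding $(1+\nu+\beta/L)^L$. The only cosmetic difference is that the paper keeps separate symbols $\beta_i$ and $\gamma_i$ for the convolutional and fully connected deviations where you merge them into a single $\beta_i$ (and introduce $s_i$ for the layerwise operator norms); this unified notation is arguably cleaner but changes nothing in the argument.
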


Now we prove a bound when all the layers can change between 
$\Theta$ and $\tTheta$.

\begin{lemma} \label{lemma_pro_all.general}
For any 
$\Theta = (K^{(1)},...,K^{(L_c)}, V^{(1)},...,V^{(L_f)}),
\tTheta = (\tK^{(1)},...,\tK^{(L_c)}, \tV^{(1)},...,\tV^{(L_f)}) \in \calO_{\beta,\nu}$, 
for any input $x$,
\begin{align*}
| \ell(f_{\Theta}(x),y) - \ell(f_{\tTheta} (x),y) |
& \leq \chi \lambda 
      \left( 1 + \nu + \beta/L \right)^L  
      \norm{\Theta - \tTheta}_N.  
\end{align*}
\end{lemma}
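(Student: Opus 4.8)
The plan is to mirror the proof of Lemma~\ref{lemma_pro_all}: interpolate between $\Theta$ and $\tTheta$ by a telescoping (hybrid) argument that replaces one layer at a time, bound each single-layer replacement with the appropriate one-layer lemma, and then collapse the telescoping sum with the triangle inequality. Concretely, I would fix an ordering of the $L = L_c + L_f$ layers and define a sequence of parameter settings $\Theta = \Psi_0, \Psi_1, \dots, \Psi_L = \tTheta$, where $\Psi_t$ agrees with $\tTheta$ on the first $t$ layers (in the chosen order) and with $\Theta$ on the rest, so that consecutive settings $\Psi_{t-1}$ and $\Psi_t$ differ in exactly one layer.

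For each step $t$, I would apply Lemma~\ref{lemma:pro_single.conv} if the changed layer is convolutional and Lemma~\ref{lemma:pro_single.full} if it is fully connected. Both lemmas produce the same prefactor $\chi \lambda (1 + \nu + \beta/L)^L$, multiplied by $\norm{\op(K^{(j)}) - \op(\tK^{(j)})}_2$ or $\norm{V^{(j)} - \tV^{(j)}}_2$ respectively. The triangle inequality then gives
\[
| \ell(f_{\Theta}(x),y) - \ell(f_{\tTheta}(x),y) |
  \leq \sum_{t=1}^L | \ell(f_{\Psi_{t-1}}(x),y) - \ell(f_{\Psi_t}(x),y) |,
\]
and since each layer is changed exactly once, the sum of the per-step distances is precisely $\norm{\Theta - \tTheta}_N$. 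Pulling out the common prefactor yields the claimed bound.

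The step I expect to be the main obstacle is justifying that each single-layer lemma is actually applicable, since Lemmas~\ref{lemma:pro_single.conv} and~\ref{lemma:pro_single.full} require both arguments $\Psi_{t-1}, \Psi_t$ to lie in $\calO_{\beta,\nu}$, and a careless hybrid can leave this set. Indeed, $\norm{\cdot}_N$ is an $\ell_1$-type sum of per-layer distances, so a staircase path between two points of the ball $\calO_{\beta,\nu}$ need not stay inside it: replacing a near-initialization layer while another layer is still far from initialization can push the total distance toward $2\beta$. If the hybrid exits $\calO_{\beta,\nu}$, the product of the operator norms of the unchanged layers can exceed $(1 + \nu + \beta/L)^L$, and the stated constant is lost.

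I would resolve this by choosing the ordering adaptively: first replace every layer whose target (in $\tTheta$) is closer to initialization than its source (in $\Theta$), and only afterwards replace the layers that move farther from initialization. Writing $d_i^{(0)}$ and $d_i^{(1)}$ for the distances from initialization of layer $i$ under $\Theta$ and $\tTheta$, the total distance of $\Psi_t$ equals $\sum_i d_i^{(0)} + \sum_{i \le t} (d_i^{(1)} - d_i^{(0)})$ in this order; along the first (nonincreasing) phase it stays below $\norm{\Theta - \Theta_0}_N \le \beta$, and along the second (nondecreasing) phase it stays below its final value $\norm{\tTheta - \Theta_0}_N \le \beta$. Hence every $\Psi_t \in \calO_{\beta,\nu}$, the single-layer lemmas apply verbatim at each step, and the telescoping goes through with the intended prefactor.
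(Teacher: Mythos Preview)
Your approach is exactly the paper's: replace layers one at a time, invoke Lemma~\ref{lemma:pro_single.conv} or Lemma~\ref{lemma:pro_single.full} at each step, and collapse the telescoping sum with the triangle inequality. The paper's proof is the one-sentence version of this and does not discuss whether the intermediate hybrids $\Psi_t$ remain in $\calO_{\beta,\nu}$.

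The concern you raise is legitimate and is glossed over in the paper. Because $\norm{\cdot}_N$ is a sum of per-layer distances, a hybrid mixing layers of $\Theta$ and $\tTheta$ can have $\norm{\Psi_t - \Theta_0}_N$ as large as $2\beta$ under a naive ordering, and then the single-layer lemmas (whose proofs rely on $\sum_i \beta_i + \sum_i \gamma_i \le \beta$ to obtain the factor $(1+\nu+\beta/L)^L$) would not apply with the stated constant. Your two-phase ordering---first swap the layers that move toward $\Theta_0$, then those that move away---keeps every $\Psi_t$ in $\calO_{\beta,\nu}$: along the first phase the running distance is bounded by $\norm{\Theta-\Theta_0}_N\le\beta$, and along the second it is bounded by its terminal value $\norm{\tTheta-\Theta_0}_N\le\beta$. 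This is correct and recovers the intended prefactor, so your argument is aligned with the paper's but strictly more careful.
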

\begin{proof}
Consider transforming $\Theta$ to $\tTheta$ by replacing one layer 
at a time of $\Theta$
with the corresponding layer in $\tTheta$.  
Applying Lemma~\ref{lemma:pro_single.conv} to bound the distance
traversed with each replacement of a convolutional layer,
and Lemma~\ref{lemma:pro_single.full} to bound the distance
traversed with each replacement of a fully connected layer,
and combining this with 
the triangle inequality gives
the lemma.
\end{proof}

Now we are ready to prove our more general bound.

\begin{proof}[Proof (of Theorem~\protect\ref{thm:generalization.general})]
Consider the mapping $\phi$ from the 
ball of $|| \cdot ||_N$-radius $1$ centered at $\Theta_0$
to $\ell_{\cF_{\beta,\nu}}$ defined by 
$\phi(\Theta) = \ell_{f_{\Theta_0 + \beta \Theta}}$.  
Lemma~\ref{lemma_pro_all} implies that this
mapping is 
$\left(\chi \lambda \beta 
      \left( 1 + \nu + \beta/L \right)^L,W\right)$-Lipschitz.  
Applying Lemma~\ref{l:lipschitz} completes the proof.
\end{proof}

  \begin{figure}[t]
  \centering
  \includegraphics[width=8.5cm]{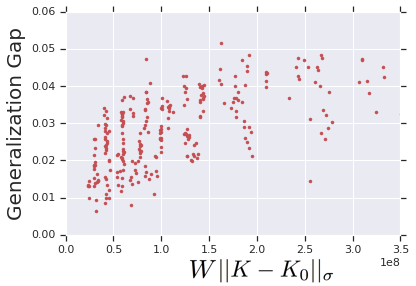}
  \caption{\label{fig:error_vs_wb} Generalization gaps for a 10-layer all-conv model on CIFAR10 dataset.}
  \end{figure}

 \begin{figure}[t]
  \centering
  \includegraphics[width=8.5cm]{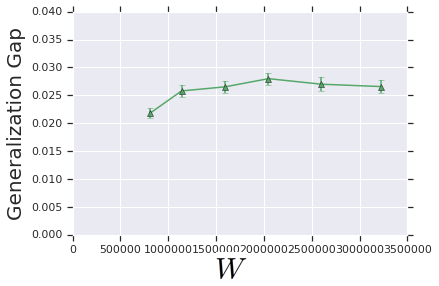}
  \caption{\label{fig:gen_error_vs_w} Generalization gap as a function of  $W$}
  \end{figure}

\subsection{More comparisons}

Theorem~\ref{thm:generalization.general} applies in the case that there
are no convolutional layers, i.e.\ for a fully connected network.  
In this subsection, we compare its
bound in this case with the bound of \citep{bartlett2017spectrally}.
Because the bounds are in terms of different quantities, we compare
them in a simple concrete case.  In this case,
for $D = c d^2$, each hidden layer has $D$ components,
and there are $D$ classes.  For all $i$,
$V_0^{(i)} = I$ and $V^{(i)} = I + H/\sqrt{D}$, where $H$ is a Hadamard matrix
(using the Sylvester construction), and $\chi = M = 1$.
Then, dropping the
superscripts, each layer $V$ has 
$
|| V ||_2  = 2,\; || V - V_0 ||_2  = 1,\; || V - V_0 ||_{2,1}  = D. \\
$

Further, in the notation of Theorem~\ref{thm:generalization.general}, 
$W = D^2 L$, and $\beta = L$, and $\nu = 0$.  
Plugging into to Theorem~\ref{thm:generalization.general} yields a
bound on the generalization gap proportional to
\[
 \frac{D L
       + D \sqrt{L \log(\lambda)}
                        + \sqrt{\log(1/\delta)}}{\sqrt{n}}
\]
where, in this case, the bound of \citep{bartlett2017spectrally} 
is proportional to
\[
\frac{
       \lambda
        2^L
        L^{3/2}
        D \log( D L)
        + \sqrt{\log(1/\delta)}
      }
      {
      \sqrt{n}
      }
\]
and, when $D$ is large relative
to $L$, Corollary 1 of \citep{golowich2017size} 
(approximately) gives
\[
\frac{\lambda 2^{L/2} \sqrt{L} D^{L/2}+ \sqrt{\log(1/\delta)}}{\sqrt{n}}.
\]

\section{Experiments}
We trained a 10-layer all-convolutional model on the 
CIFAR-10 dataset.
The architecture was similar to VGG \citep{simonyan2014very}.  The
network was trained with dropout regularization and an exponential
learning rate schedule.  We define the generalization gap as the
difference between train error and test error.  In order to analyze
the effect of the number of network parameters on the generalization gap, we
scaled up the number of channels in each layer, while keeping other
elements of the architecture, including the depth, fixed.  Each
network was trained repeatedly, sweeping over different values of the
initial learning rate and batch sizes $32, 64, 128$. 
For each setting
the results were averaged over five different random initializations.
Figure~\ref{fig:error_vs_wb} shows the generalization gap for
different values of $W || K - K_0||_{\sigma}$.
As in the bound of
Theorem~\ref{thm:generalization.general}, the generalization
gap increases with $W || K - K_0 ||_{\sigma}$.  Figure~\ref{fig:gen_error_vs_w} shows
that as the network becomes more over-parametrized, the generalization
gap remains almost flat with increasing $W$. This is expected due to
role of over-parametrization on
generalization~\citep{neyshabur2019towards}. 
An explanation of this phenomenon that is consistent with the
bound presented here is that, ultimately,
increasing $W$ leads to a decrease in value of
$|| K - K_0 ||_{\sigma}$; see Figure~\ref{fig:B_vs_w}.
The fluctuations in Figure~\ref{fig:B_vs_w} are partly due to the fact that training  neural networks is not an stable process. 
We provide the medians $|| K - K_0 ||_{\sigma}$ for different values of $W$ in Figure~\ref{fig:beta_w_median}.

\begin{figure}%
    \centering
    \subfloat[ mean and error bar]{{\includegraphics[width=6.2cm]{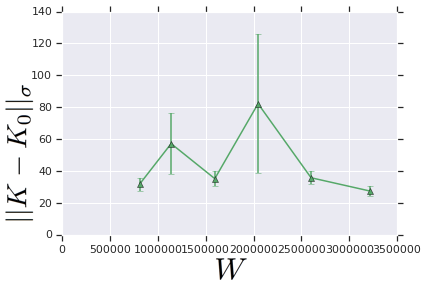} } \label{fig:B_vs_w}}%
    \qquad
        \subfloat[median]{{\includegraphics[width=6.2cm]{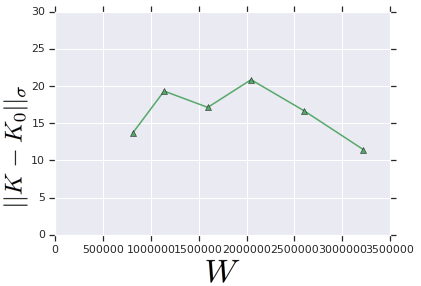} } \label{fig:beta_w_median}}%
    \caption{$|| K - K_0||_{\sigma}$ as a function of $W$.}
\end{figure}

%
%
%
%

\subsubsection*{Acknowledgments}

We thank Peter Bartlett, Jaeho Lee and Sam Schoenholz for valuable
conversations, and anonymous reviewers for helpful comments.

\bibliography{references}
\bibliographystyle{plainnat}

\appendix

\section{Proof of Lemma~\protect\ref{l:lipschitz}}
\label{a:lipschitz}

\begin{definition}
If $(X,\rho)$ is a metric space and $H \subseteq X$, we say that $G$ is an
$\epsilon$-cover of $H$ with respect to $\rho$ if every $h \in H$ has a $g \in G$
such that $\rho(g,h) \leq \epsilon$.  Then
$\cN_{\rho}(H,\epsilon)$ denotes the size of the smallest 
$\epsilon$-cover of $H$ w.r.t.\ $\rho$.
\end{definition}

\begin{definition}
For a domain $Z$, define a metric $\rhomax$ on pairs of functions 
from $Z$ to $\R$ by $\rhomax(f,g) = \sup_{z \in Z} | f(z) - g(z) |$.
\end{definition}


We need two lemmas in terms of these covering numbers.
The first is by now a standard bound from Vapnik-Chervonenkis theory
\citep{VC71,Vap82,Pol84}.  For example, it is
a direct consequence of \citep[Theorem 3]{Hau92}.
\begin{lemma}
\label{l:cover.relative}
For any $\eta > 0$, there is a constant $C$ depending
only on $\eta$ such that the following holds.
Let $G$ be an arbitrary set of functions from a common domain $Z$
to $[0,M]$.
If there are constants $B$ and $d$ such that,
$\cN_{\rhomax}(G,\epsilon) \leq \left( \frac{B}{\epsilon} \right)^d$
for all $\epsilon > 0$,
then,
for all large enough $n \in \N$, for any $\delta > 0$, 
for any probability distribution $P$
over $Z$, if $S$ is obtained by sampling
$n$ times independently from $P$, then, with
probability at least $1-\delta$, for all $g \in G$,
\begin{align*}
     \mathE_{z \sim P} [g(z)] \leq 
    (1 + \eta) \mathE_S[g] +
  \frac{CM (d \log(B n) + \log(1/\delta))}{n}.
\end{align*}
\end{lemma}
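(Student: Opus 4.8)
The plan is to use the standard two-step argument underlying relative VC-type bounds: reduce the uniform statement over the infinite class $G$ to a finite set via the covering hypothesis, then apply a per-function \emph{relative} (multiplicative) concentration inequality to each element of the cover and finish with a union bound. Because the cover is taken with respect to $\rhomax$, which is a supremum over all of $Z$, it is data-independent; consequently no symmetrization (ghost-sample) argument is needed, and each cover element is an ordinary fixed $[0,M]$-valued random function to which concentration applies directly.

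First I would fix $\epsilon > 0$ (to be set to a small multiple of $1/n$ at the end) and let $G_\epsilon$ be a minimal $\epsilon$-cover of $G$ with respect to $\rhomax$, so that $|G_\epsilon| \le (B/\epsilon)^d$ by hypothesis. For any $g \in G$ there is a $g' \in G_\epsilon$ with $\rhomax(g,g') \le \epsilon$, and since $\rhomax$ is the supremum over $Z$ we obtain $|\mathE_{z\sim P}[g(z)] - \mathE_{z\sim P}[g'(z)]| \le \epsilon$ and $|\mathE_S[g] - \mathE_S[g']| \le \epsilon$ simultaneously. Hence a relative bound established for every $g' \in G_\epsilon$ transfers to every $g \in G$ at the cost of an additive $O(\epsilon)$ slack on each side, which will be absorbed into the $1/n$ error term.

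The heart of the argument is the per-function inequality. Rescaling $g'$ to take values in $[0,1]$ by dividing by $M$, I would invoke a one-sided multiplicative Chernoff / Bernstein bound, exploiting the variance-to-mean relation $\mathrm{Var}(g'(z)) \le M\,\mathE_{z\sim P}[g'(z)]$ that holds for any $[0,M]$-valued variable (since $g'(z)^2 \le M g'(z)$). Bernstein's inequality then gives, for a fixed $g'$, a tail bound of the form $\Pr\big[\mathE_{z\sim P}[g'] > (1+\eta)\,\mathE_S[g'] + t\big] \le \exp(-c_\eta\, n t / M)$ for a constant $c_\eta$ depending only on $\eta$. This is exactly where the multiplicative $(1+\eta)$ factor and the fast $1/n$ rate originate, in contrast with the $1/\sqrt{n}$ rate one would get from Hoeffding. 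Taking a union bound over the at most $(B/\epsilon)^d$ elements of $G_\epsilon$ and setting the total failure probability to $\delta$ forces $t$ to equal a constant (depending only on $\eta$) times $\frac{M(d\log(B/\epsilon) + \log(1/\delta))}{n}$.

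Finally I would choose $\epsilon = \Theta(1/n)$, so that $\log(B/\epsilon) = O(\log(Bn))$ and the cover-transfer slack $O(\epsilon)$ is lower-order relative to the $1/n$ error term; a slight inflation of the constant (permissible for all large enough $n$, as the statement allows) absorbs the passage of the $(1+\eta)$ factor from $g'$ back to $g$. The main obstacle is the relative concentration step: one must verify that the rate constant $c_\eta$, and therefore the final $C$, depends only on $\eta$ and not on $M$, $B$, $d$, or the distribution, and that the multiplicative structure genuinely survives the reduction from the cover to the full class. Everything else is bookkeeping. As the text notes, this is precisely the content of Haussler's Theorem~3, so one could alternatively appeal to it directly.
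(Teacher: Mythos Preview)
The paper does not actually prove this lemma; it simply states that it ``is by now a standard bound from Vapnik--Chervonenkis theory'' and that it ``is a direct consequence of \citep[Theorem 3]{Hau92}.'' Your proposal supplies exactly the standard argument that underlies such results (data-independent $\rhomax$-cover, per-element multiplicative Chernoff/Bernstein exploiting $\mathrm{Var}(g') \le M\,\E[g']$, union bound, $\epsilon = \Theta(1/n)$), and you yourself note the Haussler citation at the end. So your approach is correct and is precisely the content the paper is pointing to; there is nothing to compare because the paper treats the lemma as a black box.
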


We will also use the following, which is the combination of 
(2.5) and (2.7) of
\citep{gine2001consistency}.
\begin{lemma}
\label{l:cover.regret.bigB}
Let $G$  
be an arbitrary set of functions from a common domain $Z$
to $[0,M]$.
If 
there are constants $B \geq 5$ and $d$ such that
$\cN_{\rhomax}(G,\epsilon) \leq \left( \frac{B}{\epsilon} \right)^d$
for all $\epsilon > 0$,
for all large enough $n \in \N$, for any $\delta > 0$, 
for any probability distribution $P$
over $Z$, if $S$ is obtained by sampling
$n$ times independently from $P$, then, with
probability at least $1-\delta$, for all $g \in G$,
\begin{align*}
     \mathE_{z \sim P} [g(z)] \leq 
    \mathE_S [g] +
 C M \sqrt{\frac{d \log B + \log \frac{1}{\delta}}{n}},
\end{align*}
where $C$ is an absolute constant.
\end{lemma}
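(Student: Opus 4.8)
The plan is to obtain Lemma~\ref{l:cover.regret.bigB} directly from the empirical-process estimates of \citet{gine2001consistency}: the statement is exactly what results from combining their inequalities (2.5) and (2.7) once the hypotheses are matched to ours. Their (2.7) controls the expectation of the supremum of the symmetrized empirical process indexed by $G$ in terms of a uniform-entropy integral, and their (2.5) upgrades this expectation bound to a high-probability bound via a concentration inequality; the role of the hypothesis $\cN_{\rhomax}(G,\epsilon)\le(B/\epsilon)^d$ is simply to supply an explicit integrable upper bound on the metric entropy $\log\cN_{\rhomax}(G,\epsilon)$. The condition $B\ge 5$ keeps $\log B$ bounded away from $0$, which is what lets the final bound be written with an absolute constant $C$ and with the clean dependence $\sqrt{d\log B}$. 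So the first task is bookkeeping: verify that our class of $[0,M]$-valued functions, our metric $\rhomax$ (which dominates every empirical $L_2(P_n)$ metric, so that $\rhomax$-covering numbers upper bound the $L_2(P_n)$-covering numbers used in chaining), and our polynomial covering hypothesis meet the measurability and entropy conditions under which (2.5)--(2.7) are stated, and then read off the constant.

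For a self-contained proof I would instead run the three standard steps. First, by McDiarmid's bounded-differences inequality applied to $\Psi(S)=\sup_{g\in G}(\mathE_{z\sim P}[g]-\mathE_S[g])$ --- which changes by at most $M/n$ when a single $z_t$ is replaced, since $g\in[0,M]$ --- one gets $\Psi(S)\le\E[\Psi(S)]+M\sqrt{\log(1/\delta)/(2n)}$ with probability at least $1-\delta$, producing the $\sqrt{\log(1/\delta)/n}$ term. Second, symmetrization bounds $\E[\Psi(S)]$ by twice the Rademacher complexity $\E_{S,\sigma}\sup_{g\in G}\frac1n\sum_{t=1}^n\sigma_t g(z_t)$. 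Third, Dudley's chaining bounds this Rademacher complexity by $\inf_{\alpha\ge0}\bigl(4\alpha+\tfrac{12}{\sqrt n}\int_\alpha^{M}\sqrt{\log\cN_{\rhomax}(G,\epsilon)}\,d\epsilon\bigr)$, into which the covering hypothesis is inserted; since the resulting integral converges at the origin we may take $\alpha=0$.

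The one place that requires care --- and the reason a naive use of the entropy integral degrades the $B$-dependence --- is the evaluation of $\int_0^{M}\sqrt{d\log(B/\epsilon)}\,d\epsilon$. The crucial point is to truncate at the true $\rhomax$-diameter $M$ of $G$, beyond which $\cN_{\rhomax}(G,\epsilon)=1$, rather than integrating the formula $(B/\epsilon)^d$ out to $\epsilon=B$ where it formally reaches $1$. The substitution $\epsilon=Ms$ together with $\sqrt{a+b}\le\sqrt a+\sqrt b$ gives
\[
\int_0^{M}\sqrt{\log(B/\epsilon)}\,d\epsilon
 \le M\Bigl(\sqrt{\log(B/M)}+\int_0^1\sqrt{\log(1/s)}\,ds\Bigr)
 = M\Bigl(\sqrt{\log(B/M)}+\tfrac{\sqrt\pi}{2}\Bigr),
\]
which is $O(M\sqrt{\log B})$ for $B\ge 5$ in the regime $M\le B$ of interest; integrating instead to $\epsilon=B$ would yield $\Theta(B\sqrt d)$. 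This contrast is precisely the ``exponentially worse dependence on $B$'' noted in the introduction, since $B=e^{\log B}$. Combining the three steps then collapses to $\mathE_{z\sim P}[g]\le\mathE_S[g]+CM\sqrt{(d\log B+\log(1/\delta))/n}$ with absolute $C$.

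The main obstacle is therefore not any single inequality but ensuring the $B$-dependence stays logarithmic: honoring the diameter truncation in the chaining integral, and tracking that the emergent constant is genuinely absolute --- independent of $d$, $B$, $M$, and (unlike Lemma~\ref{l:cover.relative}) of the slack parameter $\eta$. Because the cited result of \citet{gine2001consistency} was engineered with exactly this $B$-dependence in mind, I would present the citation route as the primary argument and relegate the symmetrization--chaining derivation to a remark establishing that the constant can be made explicit.
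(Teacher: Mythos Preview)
Your primary route is exactly the paper's: the paper does not prove this lemma at all but simply states it as ``the combination of (2.5) and (2.7) of \citet{gine2001consistency},'' which is what you propose. The self-contained McDiarmid--symmetrization--Dudley sketch you add as a secondary remark goes beyond what the paper provides and is correct in spirit, though your step $M\bigl(\sqrt{\log(B/M)}+\sqrt{\pi}/2\bigr)=O(M\sqrt{\log B})$ silently assumes $M\le B$ and $M$ bounded away from $0$, which holds in all the paper's applications but is not asserted in the lemma as stated.
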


The above bound only holds for $B \geq 5$.
The following, which can be obtained by combining Talagrand's
Lemma with the
standard bound on Rademacher complexity in terms of the 
Dudley entropy integral (see \citep{van2000empirical,Bar13}), yields a 
bound for all $B$.
\begin{lemma}
\label{l:cover.regret.smallB}
Let $G$  be an arbitrary set of functions from a common domain $Z$
to $[0,M]$.
If there are constants $B > 0$ and $d$ such that
$\cN_{\rhomax}(G,\epsilon) \leq \left( \frac{B}{\epsilon} \right)^d$
for all $\epsilon > 0$,
then, 
for all large enough $n \in \N$, for any $\delta > 0$, 
for any probability distribution $P$
over $Z$, if $S$ is obtained by sampling
$n$ times independently from $P$, then, with
probability at least $1-\delta$, for all $g \in G$,
\begin{align*}
     \mathE_{z \sim P} [g(z)] \leq 
    \mathE_S [g] +
 C \left( B \sqrt{\frac{d}{n}} 
          + M \sqrt{\frac{\log \frac{1}{\delta}}{n}} \right),
\end{align*}
where $C$ is an absolute constant.
\end{lemma}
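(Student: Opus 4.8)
The plan is to derive this bound from the standard chain: symmetrization to pass to the Rademacher complexity, a concentration inequality for the high-probability deviation term, and Dudley's entropy integral to convert the hypothesis on $\cN_{\rhomax}(G,\epsilon)$ into a bound on the Rademacher complexity. Writing $\mathcal{R}_n(G)$ for the Rademacher complexity of $G$, the standard symmetrization argument together with a concentration inequality for the supremum of the empirical process (McDiarmid's bounded-differences inequality suffices; this is the role of the Talagrand-type step) gives, with probability at least $1-\delta$ and simultaneously for all $g \in G$,
\[
\mathE_{z \sim P}[g(z)] \leq \mathE_S[g] + 2\mathcal{R}_n(G) + M\sqrt{\frac{\log(1/\delta)}{2n}},
\]
since every $g$ takes values in $[0,M]$. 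It therefore suffices to show $\mathcal{R}_n(G) = O\!\left(B\sqrt{d/n}\right)$ with an absolute constant: this supplies the first term of the claimed bound, the concentration step already supplying the second.

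Next I would bound $\mathcal{R}_n(G)$ via Dudley's entropy integral. Because a $\rhomax$-cover of $G$ is in particular an empirical $L_2$-cover for every sample, the hypothesis $\cN_{\rhomax}(G,\epsilon) \leq (B/\epsilon)^d$ upper bounds the covering numbers entering Dudley's bound uniformly over samples; letting the lower endpoint of the integral tend to $0$ (legitimate since the integrand is integrable at $0$) gives
\[
\mathcal{R}_n(G) \leq \frac{12}{\sqrt{n}} \int_0^{\infty} \sqrt{\log \cN_{\rhomax}(G,\epsilon)}\, d\epsilon \leq \frac{12}{\sqrt{n}} \int_0^{\infty} \sqrt{d\,\log_+(B/\epsilon)}\, d\epsilon,
\]
where $\log_+$ records that covering numbers are at least $1$, so the integrand vanishes once $\epsilon \geq B$.

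The key computation — and the only place where care is needed — is evaluating this integral so that the dependence on $B$ comes out \emph{linear} rather than logarithmic; this is exactly what distinguishes the present bound from Lemma~\ref{l:cover.regret.bigB} and what makes it useful when $B$ is small. Since the integrand is zero for $\epsilon \geq B$, the effective upper limit is $B$ regardless of $M$, and the substitution $u = \epsilon/B$ pulls a factor of $B$ out:
\[
\int_0^{\infty} \sqrt{d\,\log_+(B/\epsilon)}\, d\epsilon = B\sqrt{d}\int_0^1 \sqrt{\log(1/u)}\, du = B\sqrt{d}\cdot\frac{\sqrt{\pi}}{2},
\]
the last integral being a standard Gaussian moment. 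Hence $\mathcal{R}_n(G) \leq 6\sqrt{\pi}\,B\sqrt{d/n}$, and combining with the displayed concentration bound and folding numerical constants into $C$ yields the lemma.

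The main obstacle is precisely this integral evaluation: one must notice that the covering-number hypothesis is vacuous for $\epsilon \geq B$ and therefore truncate the Dudley integral at $B$ (not at $M$ or at $\infty$), after which the substitution $u = \epsilon/B$ makes the linear $B$-dependence manifest. The only other point to check is that the entropy integral converges at the origin so that the lower endpoint may be taken to be $0$; this holds because $\sqrt{\log(1/u)}$ is integrable on $(0,1)$, and so no nontrivial optimization of the Dudley truncation parameter is required.
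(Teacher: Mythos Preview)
Your proposal is correct and follows essentially the same route the paper indicates: the paper does not give a detailed proof but simply says the lemma ``can be obtained by combining Talagrand's Lemma with the standard bound on Rademacher complexity in terms of the Dudley entropy integral,'' which is precisely the symmetrization\,/\,concentration\,+\,Dudley chain you carry out. Your key observation---that the covering hypothesis is vacuous for $\epsilon\geq B$, so the Dudley integral effectively runs only over $(0,B)$ and the substitution $u=\epsilon/B$ extracts a linear factor of $B$---is exactly the point of this lemma as contrasted with Lemma~\ref{l:cover.regret.bigB}, and your evaluation $\int_0^1\sqrt{\log(1/u)}\,du=\Gamma(3/2)=\sqrt{\pi}/2$ is correct.
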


So now we want a bound on $\cN_{\rhomax}(G,\epsilon)$ for
Lipschitz-parameterized classes.  For this, we need
the notion of a packing which we
now define.
\begin{definition}
For any metric space $(X,\rho)$ and any
$H \subseteq S$, let $\cM_{\rho}(H,\epsilon)$
be the size of the largest subset of $H$ 
whose members are pairwise at a distance
greater than $\epsilon$ w.r.t.\ $\rho$.
\end{definition}

\begin{lemma}[\citep{kolmogorov1959varepsilon}]
\label{l:covering_by_packing}
For any metric space $(X,\rho)$, any $H \subseteq X$,
and any $\epsilon > 0$, we have
\[
\cN_{\rho}(H, \epsilon) \leq \cM_{\rho}(H, \epsilon).
\]
\end{lemma}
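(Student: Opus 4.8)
The plan is to exhibit an explicit $\epsilon$-cover of $H$ whose cardinality equals the packing number $\cM_{\rho}(H,\epsilon)$, by taking a \emph{maximum} $\epsilon$-separated subset of $H$ and showing that its very maximality forces it to cover all of $H$. This is the classical packing-covering comparison, so the work is in setting up the right extremal object and checking one short maximality argument.

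First I would dispose of the degenerate case: if $\cM_{\rho}(H,\epsilon) = \infty$, the inequality holds vacuously, so I may assume the packing number is finite. In that case there exists a largest subset $P \subseteq H$ whose members are pairwise at distance strictly greater than $\epsilon$, and by definition $|P| = \cM_{\rho}(H,\epsilon)$.

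The key step is to verify that this same set $P$ is already an $\epsilon$-cover of $H$. I would argue by maximality. Fix an arbitrary $h \in H$. If $h$ were at distance greater than $\epsilon$ from every element of $P$, then in particular $h \notin P$, and $P \cup \{h\}$ would be an $\epsilon$-separated subset of $H$ strictly larger than $P$, contradicting the choice of $P$ as a largest such set. Hence there must be some $p \in P$ with $\rho(h,p) \leq \epsilon$, so $P$ covers $h$. Since $h$ was arbitrary, $P$ is an $\epsilon$-cover of $H$, and therefore the smallest $\epsilon$-cover has size at most $|P|$, which gives $\cN_{\rho}(H,\epsilon) \leq |P| = \cM_{\rho}(H,\epsilon)$.

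The only point needing care — and the closest thing to an obstacle — is the interplay between the strict threshold (distance \emph{greater than} $\epsilon$) in the packing definition and the non-strict threshold ($\rho(g,h) \leq \epsilon$) in the covering definition. These are exactly complementary: the failure of the covering condition for $h$, namely distance greater than $\epsilon$ to every member of $P$, is precisely what is needed to enlarge the packing, so the two thresholds align and no slack is lost. I would also note explicitly that $P \subseteq H$, so $P$ is a legitimate cover whether covers are required to lie in $H$ or only in $X$.
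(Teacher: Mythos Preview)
Your argument is correct and is the standard classical proof of this packing--covering inequality. The paper does not give its own proof; it simply cites the result to \citet{kolmogorov1959varepsilon}, so there is nothing to compare against.
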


We will also need a lemma about covering a ball by
smaller balls.  This is probably also already known,
and uses a standard proof 
\citep[see][Lemma 4.1]{Pol90}, but we haven't
found a reference for it.
\begin{lemma}
\label{l:ball_by_smaller_balls}
Let
\begin{itemize}
\item $d$ be a positive integer,
\item $|| \cdot ||$ be a norm
\item $\rho$ be the metric induced by $|| \cdot ||$, and
\item $\kappa, \epsilon > 0$.
\end{itemize}
A ball in $\R^d$ of radius $\kappa$ 
w.r.t.\ $\rho$ can be
covered by $\left( \frac{3 \kappa}{\epsilon} \right)^d$
balls of radius $\epsilon$.
\end{lemma}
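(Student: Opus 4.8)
The plan is to reduce the covering problem to a packing problem via \lemref{l:covering_by_packing}, and then control the packing number by a standard volume comparison. Write $B(0,\kappa)$ for the ball of radius $\kappa$ about the origin with respect to $\rho$. Since \lemref{l:covering_by_packing} gives $\cN_{\rho}(B(0,\kappa), \epsilon) \leq \cM_{\rho}(B(0,\kappa), \epsilon)$, it suffices to show that no more than $(3\kappa/\epsilon)^d$ points of $B(0,\kappa)$ can be pairwise separated by more than $\epsilon$ in $\rho$.

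To bound this packing number, I would take any collection $x_1,\dots,x_N \in B(0,\kappa)$ that is pairwise more than $\epsilon$-separated and consider the open balls $B(x_i,\epsilon/2)$. The triangle inequality shows these are pairwise disjoint: if some $y$ lay in both $B(x_i,\epsilon/2)$ and $B(x_j,\epsilon/2)$, then $\rho(x_i,x_j) \leq \rho(x_i,y) + \rho(y,x_j) < \epsilon$, contradicting the separation. Moreover, since $\norm{x_i} \leq \kappa$, each such ball is contained in $B(0,\kappa + \epsilon/2)$.

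The key step is then a volume comparison. For Lebesgue measure on $\R^d$, balls with respect to the norm $\norm{\cdot}$ satisfy $\mathrm{vol}(B(x,r)) = r^d\, \mathrm{vol}(B(0,1))$, by translation invariance of the measure and positive homogeneity of the norm; in particular the unit-ball volume is a common positive, finite factor. Summing the volumes of the $N$ disjoint inner balls and comparing with the volume of the enclosing ball $B(0,\kappa+\epsilon/2)$ gives $N\,(\epsilon/2)^d \leq (\kappa + \epsilon/2)^d$, that is, $N \leq \left(\frac{2\kappa}{\epsilon} + 1\right)^d$.

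It remains only to simplify. In the regime $\epsilon \leq \kappa$ that is relevant to the covering-number applications, we have $\frac{2\kappa}{\epsilon} + 1 \leq \frac{3\kappa}{\epsilon}$, which yields the claimed bound $\cN_{\rho}(B(0,\kappa),\epsilon) \leq (3\kappa/\epsilon)^d$; for $\epsilon > \kappa$ the single ball $B(0,\epsilon) \supseteq B(0,\kappa)$ already covers. This lemma is entirely routine, so I expect no real obstacle; the only point requiring any care is the volume-scaling identity for a general, possibly non-Euclidean, norm, but it follows immediately from translation invariance and homogeneity, and the overall argument is the standard one (cf.\ \citep[Lemma 4.1]{Pol90}).
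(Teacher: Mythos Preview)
Your proof is correct and follows essentially the same route as the paper: reduce to a packing bound via \lemref{l:covering_by_packing}, use disjointness of the $\epsilon/2$-balls around packing points, contain them in the enlarged ball of radius $\kappa+\epsilon/2$, and compare volumes using the scaling $\mathrm{vol}(B(x,r))=r^d\,\mathrm{vol}(B(0,1))$. The paper likewise disposes of the trivial regime by assuming $\kappa>\epsilon$ without loss of generality, just as you do at the end.
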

\begin{proof}
We may assume without loss of generality that
$\kappa > \epsilon$.   Let $q > 0$ be the volume of the unit ball w.r.t.\ 
$\rho$ in $\R^{d}$.  
Then 
the volume of any $\alpha$-ball with respect
to $\rho$ is $\alpha^{d} q$.
Let $B$ be the ball of radius $r$ in $\R^d$.  
The $\epsilon/2$-balls centered at the members
of any $\epsilon$-packing of $B$
are disjoint.  Since these centers are contained in $B$,
the balls are contained in a ball of radius $\kappa + \epsilon/2$.
Thus
\[
\cM_{\rho}(B,\epsilon) \left(\frac{\epsilon}{2}\right)^{d} q
 \leq \left( \kappa + \frac{\epsilon}{2}\right)^{d} q
 \leq \left( \frac{3 \kappa}{2} \right)^{d} q.
\]
Solving for $\cM_{\rho}(B,\epsilon)$ 
and applying Lemma~\ref{l:covering_by_packing}
completes the proof.
\end{proof}

\medskip

We now prove 
Lemma~\ref{l:lipschitz}.  
Let $|| \cdot ||$
be the norm witnessing the fact that $G$ is $(B,d)$-Lipschitz
parameterized, and let $\cB$ be the unit ball in $\R^d$ w.r.t.\ 
$|| \cdot ||$ and let $\rho$ be the metric induced by
$|| \cdot ||$.  
Then, for any $\epsilon$, 
an $\epsilon/B$-cover of $\cB$ w.r.t.\ $\rho$ induces
an $\epsilon$-cover of $G$ w.r.t.\ $\rhomax$, so
\[
N_{\rhomax}(G,\epsilon) \leq N_{\rho}(\cB,\epsilon/B).
\]
Applying Lemma~\ref{l:ball_by_smaller_balls}, this
implies
\[
N_{\rhomax}(G,\epsilon) \leq \left( \frac{3 B}{\epsilon} \right)^d.
\]
Then applying Lemma~\ref{l:cover.relative},
Lemma~\ref{l:cover.regret.bigB} and Lemma~\ref{l:cover.regret.smallB}
completes the proof.

\section{Proof of (\protect\ref{e:nonuniform})}
\label{a:nonuniform}

For $\delta > 0$, and for each $j \in \N$, let $\beta_j = 5 \times 2^j$ 
let $\delta_j = \frac{1}{2 j^2}$.  Taking a union bound
over an application of Theorem~\ref{thm:generalization}
for each value of $j$, with probability at least
$1 - \sum_j \delta_j \geq 1 - \delta$, for all $j$, and
all $f \in F_{\beta_j}$
\begin{align*}
&     \mathE_{z \sim P} [\ell_f(z)] \leq 
    (1 + \eta) \mathE_S [\ell_f(z)] 
  + \frac{C (W (\beta_j + \log (\lambda  n)) + \log(j/\delta))}{n}
\end{align*}
and
\begin{align*}
     \mathE_{z \sim P} [\ell_f(z)] \leq 
    \mathE_S [\ell_f(z)] +
  C \sqrt{\frac{W (\beta_j + \log(\lambda)) + \log(j/\delta)}{n}}.
\end{align*}
For any $K$, if we apply these bounds in the case of
the least $j$ such that $|| K - K_0 ||_{\sigma} \leq \beta_j$, we get
\begin{align*}
&     \mathE_{z \sim P} [\ell_f(z)] \leq 
    (1 + \eta) \mathE_S [\ell_f(z)] 
  + \frac{C (W (2 || K - K_0 ||_{\sigma} + \log (\lambda  n)) + 
   \log(\log(|| K - K_0 ||_{\sigma})/\delta))}{n}
\end{align*}
and
\begin{align*}
     \mathE_{z \sim P} [\ell_f(z)] \leq 
    \mathE_S [\ell_f(z)] +
  C \sqrt{\frac{W (2 || K - K_0 ||_{\sigma} + \log(\lambda)) + \log(\log(|| K - K_0 ||_{\sigma})/\delta)}{n}},
\end{align*}
and simplifying completes the proof.

\section{The operator norm of $\op(K^{(i)})$}
\label{a:op}

Let $J = K^{(i)} - K_0^{(i)}$.  Since $|| \op(K^{(i)}) ||_2 = 1 + || \op(J) ||_2$,
it suffices to find $|| \op(J) ||_2$.

For the rest of this section, we number
indices from $0$, let $[d] = \{ 0,...,d-1\}$, and
define $\omega = \exp(2 \pi i/d)$.  To facilitate the
application of matrix notation, pad the $k \times k \times c \times c$
tensor $J$ out with zeros to make a $d \times d \times c \times c$
tensor $\tJ$.  

The following lemma is an immediate consequence of Theorem 6 
of \cite{sedghi2018singular}.  
\begin{lemma}[\cite{sedghi2018singular}]
\label{l:convsv}
Let $F$ be the complex $d \times d$ matrix defined by $F_{ij} = \omega^{ij}$.  

For each $u,v \in [d] \times [d]$, let $P^{(u,v)}$ be the
$c \times c$ matrix given by $P_{k \ell}^{(u,v)} = (F^T \tJ_{:,:,k,\ell} F)_{uv}$.
Then
\[
|| \op(J) ||_2 = \max_{u,v} || P^{(u,v)} ||_2.
\]
\end{lemma}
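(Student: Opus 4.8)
The plan is to derive the statement from the singular-value characterization of \citet{sedghi2018singular}, using the elementary fact that the operator norm of a matrix equals its largest singular value. First I would observe that, since an input to $\op(J)$ is organized into $c$ channels each supported on the $d \times d$ torus, applying the two-dimensional DFT $X \mapsto F^T X F$ separately within every channel defines a linear map on $\R^{d \times d \times c}$ that is, up to a global scalar, unitary. Conjugating $\op(J)$ by this map therefore preserves singular values (the scalars cancel in the similarity, leaving a genuine unitary conjugation), and it is exactly the computation performed in Theorem 6 of \citet{sedghi2018singular}: it block-diagonalizes the (circular) convolution operator into $d^2$ blocks indexed by the frequency pairs $(u,v) \in [d] \times [d]$, where the block attached to $(u,v)$ is precisely the $c \times c$ matrix $P^{(u,v)}$ with entries $P^{(u,v)}_{k\ell} = (F^T \tJ_{:,:,k,\ell} F)_{uv}$.

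Granting this, the argument is immediate. The multiset of singular values of $\op(J)$ equals that of its block-diagonalization, which in turn is the union over $(u,v)$ of the singular values of the blocks $P^{(u,v)}$ --- this is the content of Theorem 6. Taking the largest element of this union and recalling that the top singular value of a matrix is its operator norm, both for $\op(J)$ and for each $P^{(u,v)}$, yields
\[
|| \op(J) ||_2 = \max_{u,v} || P^{(u,v)} ||_2.
\]
Equivalently, one can argue purely at the operator level: the operator norm of a block-diagonal operator is the maximum of the operator norms of its blocks, and the unitary conjugation above leaves $|| \op(J) ||_2$ unchanged.

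The only points needing care are bookkeeping, not estimation. I would verify that padding the $k \times k \times c \times c$ tensor $J$ with zeros to the $d \times d \times c \times c$ tensor $\tJ$ makes $\op(J)$ coincide with the circular convolution on the $d \times d$ torus to which Theorem 6 applies, so that the diagonalization captures the whole operator; and I would confirm that the transpose convention in $F^T \tJ_{:,:,k,\ell} F$ and the assignment of the two channel indices $k,\ell$ to input versus output channels agree with the conventions of \citet{sedghi2018singular}, so that their blocks are the matrices $P^{(u,v)}$ exactly as defined here. With these conventions matched, the lemma is simply the translation of Theorem 6 from a statement about the entire list of singular values into one about the largest.
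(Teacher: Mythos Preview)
Your proposal is correct and matches the paper's treatment exactly: the paper does not give a separate proof of this lemma but simply states that it ``is an immediate consequence of Theorem 6 of \cite{sedghi2018singular}'', which is precisely the derivation you spell out (singular values of $\op(J)$ are the union of those of the $P^{(u,v)}$, and the operator norm is the largest singular value). Your additional remarks about the unitary block-diagonalization and the bookkeeping for padding and index conventions are accurate elaborations of why the consequence is immediate.
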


First, note that, by symmetry, for each $u$ and $v$, all components
of $P^{(u,v)}$ are the same.  Thus,
\begin{equation}
\label{e:uvnorm}
|| P^{(u,v)} ||_2 = c | P_{00}^{(u,v)} |.  
\end{equation}

For any $u,v$,
\[
| P_{00}^{(u,v)} | 
 = \left| \sum_{p, q} \omega^{u p} \omega^{v q} \tJ_{p,q,0,0}  \right|
 \leq \epsilon k^2
\]
and $P_{00}^{(0,0)} = \epsilon k^2$.  Combining this
with (\ref{e:uvnorm}) and Lemma~\ref{l:convsv},
$|| \op(J) ||_2 = \epsilon c k^2$, which implies
$|| \op(K) ||_2 = 1 + \epsilon c k^2$.

\section{Proof of Lemma~\protect\ref{lemma:pro_single.conv}}
\label{a:pro_single.conv}

For each convolutional layer $i$, let 
$\beta_i = || \op(K^{(i)}) - \op(K_0^{(i)}) ||_2$,
and, for each fully connected layer $i$, 
let $\gamma_i = || V^{(i)} - V_0^{(i)} ||_2$.

Since $\ell$ is $\lambda$-Lipschitz w.r.t. its first argument, we have that
$
| \ell(f_{\Theta}(x),y) - \ell(f_{\tTheta} (x),y) |
 \leq \lambda | f_{\Theta}(x) - f_{\tTheta} (x) |.
$
Let $g_{\text{up}}$ be the function from the inputs to the whole network with parameters $\Theta$  to the inputs to the convolution in
layer $j$, and let $g_{\text{down}}$ be the function from the output of this convolution to the output of
the whole network, so that $f_{\Theta} = g_{\text{down}} \circ f_{\op(K^{(j)})} \circ g_{\text{up}}$. 
Choose an input $x$ to the network, and let $u = g_{\text{up}}(x)$. 
Recalling that $|| x || \leq \chi$, and that the non-linearities 
and pooling operations are 
non-expansive, we have
$
\Vert u \Vert 
 \leq \chi {\displaystyle \prod_{i < j} \norm{\op(K^{(i)})}_2 }.
$
Using the fact that 
the non-linearities are $1$-Lipschitz, we have
\begin{align*}
| f_{\Theta}(x) - f_{\tTheta}(x) | 
& = | g_{\text{down}}(\op(K^{(j)}) u) - g_{\text{down}}(\op(\tK^{(j)}) u) | \\
& \leq \left( \prod_{i > j} \norm{\op(K^{(i)})}_2 \right) \left( \prod_i \norm{V^{(i)}}_2 \right)  \norm{\op(K^{(j)}) u - \op(\tildeK^{(j)}) u}  \\
& \leq \left( \prod_{i > j} \norm{\op(K^{(i)})}_2 \right) \left( \prod_i \norm{V^{(i)}}_2 \right) \norm{\op(K^{(j)}) - \op(\tildeK^{(j)})}_2 \norm{u} \\
& \leq \chi 
       \left( \prod_{i \neq j} \norm{\op(K^{(i)})}_2 \right) \left( \prod_i \norm{V^{(i)}}_2 \right)
          \norm{\op(K^{(j)}) - \op(\tildeK^{(j)})}_2 \\
& \leq \chi \left( \prod_{i \neq j} (1 + \nu + \beta_i) \right)
          \left( \prod_{i} (1 + \nu + \gamma_i) \right)
          \norm{\op(K^{(j)}) - \op(\tildeK^{(j)})}_2 \\
\end{align*}
where the last inequality uses the fact that 
$|| \op (K^{(i)}) - \op (K_0^{(i)}) ||_2 \leq \beta_i$ for all $i$,
$|| V^{(i)} - V_0^{(i)} ||_2 \leq \gamma_i$ for all $i$, 
$|| \op (K_0^{(i)}) ||_2 \leq 1 + \nu$ for all $i$ and
$|| V_0^{(i)} ||_2 \leq 1 + \nu$ for all $i$.

Since $\left( \prod_{i \neq j} (1 + \nu + \beta_i) \right) \left( \prod_i (1 + \nu + \gamma_i) \right)
 \leq \left( \prod_i (1 + \nu + \beta_i) \right) \left( \prod_i (1 + \nu + \gamma_i) \right)$, and
the latter is maximized subject to $\left( \sum_i \beta_i \right) + \sum_i \gamma_i \leq \beta$
when each summand is $\beta/L$, this completes the proof.

\section{Proof of Lemma~\protect\ref{lemma:pro_single.full}}
\label{a:pro_single.full}

For each convolutional layer $i$, let 
$\beta_i = || \op(K^{(i)}) - \op(K_0^{(i)}) ||_2$,
and, for each fully connected layer $i$, 
let $\gamma_i = || V^{(i)} - V_0^{(i)} ||_2$.

Since $\ell$ is $\lambda$-Lipschitz w.r.t. its first argument, we have that
$
| \ell(f_{\Theta}(x),y) - \ell(f_{\tTheta} (x),y) |
 \leq \lambda | f_{\Theta}(x) - f_{\tTheta} (x) |.
$
Let $g_{\text{up}}$ be the function from the inputs to the whole network with parameters $\Theta$  to the inputs to 
fully connected layer layer $j$, and let $g_{\text{down}}$ be the function from the output of this layer to the output of
the whole network, so that $f_{\Theta} = g_{\text{down}} \circ f_{V^{(j)}}\circ g_{\text{up}}$. 
Choose an input $x$ to the network, and let $u = g_{\text{up}}(x)$. 
Recalling that $|| x || \leq \chi$, and that the non-linearities 
and pooling operations are non-expansive, we have
$
\Vert u \Vert 
 \leq \chi \left( {\displaystyle \prod_{i} \norm{\op(K^{(i)})}_2 } \right)
           \left( {\displaystyle \prod_{i < j} \norm{V^{(i)}}_2 } \right).
$
Thus
\begin{align*}
| f_{\Theta}(x) - f_{\tTheta}(x) | 
& = | g_{\text{down}}(V^{(j)} u) - g_{\text{down}}(\tV^{(j)} u) | \\
& \leq \left( \prod_{i>j} \norm{V^{(i)}}_2 \right) \norm{V^{(j)} u - \tV^{(j)} u} \\
& \leq \left( \prod_{i>j} \norm{V^{(i)}}_2 \right) \norm{V^{(j)} - \tV^{(j)}}_2 \norm{u} \\
& \leq \chi 
       \left( \prod_i \norm{\op(K^{(i)})}_2 \right) \left( \prod_{i \neq j} \norm{V^{(i)}}_2 \right)
          \norm{V^{(j)} - \tV^{(j)}}_2
           \\
& \leq \chi \left( \prod_{i} (1 + \nu + \beta_i) \right)
          \left( \prod_{i \neq j} (1 + \nu + \gamma_i) \right)
           \norm{V^{(j)} - \tV^{(j)}}_2.
\end{align*}

Since $\left( \prod_{i} (1 + \nu + \beta_i) \right) \left( \prod_{i \neq j} (1 + \nu + \gamma_i) \right)
 \leq \left( \prod_i (1 + \nu + \beta_i) \right) \left( \prod_i (1 + \nu + \gamma_i) \right)$, and
the latter is maximized subject to $\left( \sum_i \beta_i \right) + \sum_i \gamma_i \leq \beta$
when each summand is $\beta/L$, this completes the proof.

\end{document}